%% file: neurips_2026.tex
\documentclass{article}

 \usepackage[preprint]{neurips_2026}

\usepackage[utf8]{inputenc} 
\usepackage[T1]{fontenc}    
\usepackage{hyperref}       
\usepackage{url}            
\usepackage{booktabs}       
\usepackage{amsfonts}       
\usepackage{nicefrac}       
\usepackage{microtype}      
\usepackage{xcolor}         

\usepackage{algorithm}
\usepackage{algorithmic}
\usepackage{amssymb}
\usepackage{float}
\usepackage{graphicx}
\usepackage{caption}
\usepackage{subcaption}
\usepackage{amsmath}
\usepackage{amsthm}
\newtheorem{proposition}{Proposition}[section]
\usepackage{enumitem}

\title{Beyond Uniform Credit Assignment: Selective Eligibility Traces for RLVR}

\author{Chaoli Mou$^{1}$ \quad Zhan Zhuang$^{1, 2}$ \quad Xinning Chen$^{1}$ \quad \textbf{Yu Zhang}$^{1,}$\thanks{Corresponding author.} \\
$^{1}$Southern University of Science and Technology \quad $^{2}$City University of Hong Kong \\ 
\texttt{12432691@mail.sustech.edu.cn} \quad \texttt{12250063@mail.sustech.edu.cn} \\ \quad \texttt{chenxn@sustech.edu.cn} \quad \texttt{yu.zhang.ust@gmail.com}
}

\begin{document}

\maketitle

\begin{abstract}
    Reinforcement Learning with Verifiable Rewards (RLVR) has become a key approach for improving the reasoning abilities of large language models. However, widely used critic-free algorithms such as Group Relative Policy Optimization (GRPO) necessitate a ``uniform credit assignment'' assumption that indiscriminately broadcast trajectory-level advantages, hindering learning efficiency by failing to distinguish critical reasoning steps. To address this limitation, we propose Selective Eligibility Traces (S-trace). Grounded in the intuition of partial trust region preservation, we initially introduce P-trace as a sample-efficient, critic-free eligibility traces method, upon which we build S-trace, implementing a sparse eligibility traces mechanism to further mitigate variance and achieve fine-grained credit assignment by selectively masking low-entropy tokens.
    Theoretically, we contextualize the recent Group Sequence Policy Optimization (GSPO) method within the critic-free eligibility traces framework, identifying it as a special instance of the eligibility traces method operating under uniform credit assignment. Experiments demonstrate that S-trace not only outperforms GRPO, showing gains of 0.49\% on Qwen3-1.7B and 3.16\% on Qwen3-4B, and maintaining a robust 2.98\% improvement when scaled further to Qwen3-8B in average pass@16, but notably achieves this with simultaneously higher sample and token efficiency.
\end{abstract}

\section{Introduction}
Reasoning-oriented post-training has increasingly moved beyond Reinforcement Learning from Human Feedback (RLHF) toward Reinforcement Learning with Verifiable Rewards (RLVR), where supervision comes from objective outcome-level feedback. Unlike RLHF, which relies on subjective human preference labels \cite{ouyang2022training, rafailov2023direct, casper2023open}, RLVR is particularly well suited to mathematical and logical reasoning, where final answers can often be automatically verified. Recent frameworks, such as DeepSeek-R1 \cite{guo2025deepseek}, demonstrate the effectiveness of this paradigm, showing that verifiable rewards can elicit strong reasoning behaviors in large language models (LLMs). In particular, DeepSeek-R1 exhibits an emergent ``aha moment'', where the model reflects on and revises its reasoning during generation, suggesting that RLVR may encourage forms of model-free planning \cite{bush2025interpreting,hanna2025when}.


The triumph of DeepSeek-R1 is underpinned by Group Relative Policy Optimization (GRPO)~\cite{shao2024deepseekmath}, a critic-free reinforcement learning algorithm that eschews the critic network central to PPO \cite{schulman2017proximal}. By estimating advantages from groups of sampled responses, GRPO reduces memory overhead and avoids the inaccuracies inherent in learned value estimation. However, this critic-free design also results in coarser credit assignment. GRPO effectively relies on a ``uniform credit assignment'' assumption, assigning identical effective credit to every token in a response. For long mathematical reasoning traces spanning thousands of tokens, this uniform assignment can introduce substantial noise since routine tokens and critical reasoning steps are reinforced indiscriminately.

Credit assignment is a fundamental and long-standing challenge in reinforcement learning \cite{pignatelli2024a}. The difficulty is notably pronounced in the RLVR setting, where the sparsity of outcome-based rewards clashes with the combinatorial vastness of the token-based action space. \textcolor{black}{Widely used approaches such as GRPO struggle to bridge this gap, as their reliance on the aforementioned coarse granularity leads to sample-inefficient optimization, leaving the dense temporal dependencies within reasoning chains largely unexploited.} To address this limitation, drawing inspiration from eligibility traces \cite{sutton1988learning,sutton1998reinforcement,van2021expected,gupta2024past} and conceptually revisiting the paradigm of actor-only eligibility traces method \cite{10.5555/645527.657471}, we propose S-trace. By reformulating the policy gradient of PPO and grounded in the intuition of partial trust region preservation, we approximate the historical importance weights $r_{<t}(\theta)$ within the eligibility traces term with the current importance weight $r_{t}(\theta)$ to deduce P-trace, a sample-efficient and critic-free eligibility traces method. While the integration of eligibility traces into RLVR is not unprecedented, as seen in approaches like GRPO($\lambda$) \cite{parthasarathi2025grpo}, such methods typically rely on dense eligibility traces, which risk indiscriminately reinforcing noisy tokens. Guided by the 80/20 rule \cite{wang2025beyond}, S-trace distinguishes itself by selectively incorporating into the eligibility trace computation only high-entropy tokens that facilitate logical transitions, an approach that reduces the variance inherent in P-trace while simultaneously constructing sparse eligibility traces akin to selective credit assignment \cite{DBLP:journals/corr/abs-2202-09699} to effectively mitigate overfitting and enhance generalization.

To summarize, our contributions are three-fold.
\begin{itemize}[noitemsep,nolistsep,leftmargin=15pt]
    \item We propose S-trace, a critic-free eligibility-trace method for RLVR that introduces temporal structure into token-level credit assignment through stop-gradient-based eligible importance weights, while sparsifying the trace computation by selectively retaining high-entropy tokens. 
    \item We empirically validate S-trace's superior sample efficiency over GRPO($\lambda$) and theoretically characterize GSPO \cite{zheng2025group} as a special instance of eligibility traces operating under uniform assignment.
    \item Comprehensive evaluations across mathematical benchmarks confirm the superiority and scalability of S-trace over GRPO, yielding consistent pass@16 improvements of 0.49\% on Qwen3-1.7B, 3.16\% on Qwen3-4B, and a sustained 2.98\% on Qwen3-8B, while simultaneously operating with higher sample and token efficiency.
\end{itemize}

\section{Related Work}
\textbf{Critic-free RL}. 
Driven by the need to reduce memory overhead and enhance computational efficiency, critic-free RL has become a dominant paradigm in LLM fine-tuning. 
Before the prevalence of group-based methods, representative on-policy approaches such as ReMax \cite{li2024remax}, and RLOO \cite{ahmadian-etal-2024-back} sought to eliminate the critic by leveraging simplistic baseline estimates. Subsequently, GRPO \cite{shao2024deepseekmath} estimates baselines on-the-fly by standardizing rewards across a group of sampled responses, enabling robust, critic-free updates within the PPO framework.
Recent variants extend this framework through improved advantage estimation, refined clipping, dynamic sampling, or importance-weighting designs, including REINFORCE++~\cite{hu2025reinforce++}, Dr.GRPO~\cite{liu2025understanding}, OPO~\cite{hao2025policy}, DAPO~\cite{yu2025dapo}, StableReinforce~\cite{zhang2025r1}, CISPO~\cite{chen2025minimax}, GSPO~\cite{zheng2025group}, and GEPO~\cite{zhang2025gepo}. Another line explores off-policy or experience-enhanced formulations, such as GRPO with expert demonstrations~\cite{yan2025learning}, ExGRPO~\cite{zhan2025exgrpo}, RL-Plus~\cite{dong2025countering}, TOPR~\cite{letapered}, and Asymmetric REINFORCE~\cite{arnal2025asymmetric}. Despite these advances, most critic-free methods still assign the same trajectory-level advantage uniformly to all tokens, leaving fine-grained temporal credit assignment largely underexplored.

\textbf{Non-uniform Credit Assignment}. Advantage shaping strategies have emerged to facilitate non-uniform credit assignment, utilizing entropy \cite{cheng2025reasoning, wang2025emergent} or statistical methods such as Fisher’s exact test and information gain \cite{sun2025ktae} to modulate token reward. However, these methods are typically heuristic. Another line of work utilizes process reward models to offer denser reward signals \cite{li2025process, cheng2025stop}. Nevertheless, \cite{jia2025do} theoretically suggests that such dense supervision is not statistically superior to outcome supervision. Alternatively, vine sampling~\cite{kazemnejad2025vineppo} has been employed for finer credit assignment. However, this strategy involves an intricate rollout process that can be practically cumbersome to deploy in large-scale training pipelines. Within a parallel strand of research, a promising value-based RL method, TBRM \cite{yuan2025trajectory}, offers a principled alternative, ensuring that credit assignment is implicitly and provably handled during the learning procedure. Despite the theoretical appeal of such a method, our study specifically focuses on optimizing policy gradient algorithms, which currently dominate the landscape of RLVR.

\textbf{Eligibility Traces}. Eligibility traces \cite{sutton1988learning,sutton1998reinforcement,van2021expected,gupta2024past} serve as a cornerstone of temporal credit assignment in classical RL, significantly accelerating learning efficiency. However, their application has predominantly been confined to value function estimation \cite{10.5555/645529.658134,van2021expected, gupta2024past,elelimy2025deep}. Different from these works, we resurrect the actor-only eligibility trace paradigm \cite{10.5555/645527.657471}. While conceptually echoing the integration of eligibility traces in GRPO($\lambda$) \cite{parthasarathi2025grpo} into critic-free policy gradients, the proposed S-trace method diverges by offering more nuanced algorithmic design.

\section{Preliminaries}
We begin by establishing the formal framework for RLVR, followed by a review of two popular reinforcement learning algorithms commonly used in LLM fine-tuning. Formally, let $\mathbf{o}$ represent the full output sequence and $o_t$ a single token at time step $t$.
We formulate  RLVR as a sequential decision-making process. Specifically, given data distribution $\mathcal{Q}$ and a query prompt $x\sim\mathcal{Q}$, the generation process is modeled as an MDP, where the state $s_t$ is defined as the full context history $s_t=x\oplus \textbf{o}_{<t}$, \textcolor{black}{with $\oplus$ denoting concatenation}, and the action $a_t$ at time $t$ is to select a token $o_t$ from the vocabulary $\mathcal{V}$. The environmental dynamics are deterministic such that given the current state $s_t$ and action $a_t$, the environment transitions to $s_{t+1} = s_t \oplus a_t = x\oplus \textbf{o}_{<t}\oplus o_t$ with probability 1.

\subsection{Proximal Policy Optimization (PPO)}
PPO \cite{schulman2017proximal} addresses the instability inherent in vanilla policy gradients by enforcing a trust region constraint without computational overhead of second-order optimization. It employs a clipped surrogate objective to restrain the policy update within a safe vicinity of the behavioral policy $\pi_{\theta_{\text{old}}}$. Formally, the objective function to be maximized is formulated as
\begin{equation}
\label{ppo_objective}
\mathcal{J}_{\text{PPO}}(\theta) = \mathbb{E}_{x\sim\mathcal{Q},\textbf{o}\sim\pi_{\theta_{\text{old}}}(\cdot|x)} \Bigg[\sum_{t=1}^{|\textbf{o}|} \\
\min\Big(r_{t}(\theta)\hat{A}_{t}, \text{clip}(r_{t}(\theta), 1-\epsilon, 1+\epsilon)\hat{A}_t\Big) \Bigg],
\end{equation}
where $r_t(\theta)=\frac{\pi_\theta(o_t|x,\textbf{o}_{<t})}{\pi_{\theta_{\text{old}}}(o_t|x,\textbf{o}_{<t})}$ denotes the importance ratio, $\epsilon$ is the clipping range, and the advantage $\hat{A}_t$ is estimated via Generalized Advantage Estimation (GAE) \cite{schulman2015high} by leveraging immediate reward $r(s_t, a_t)$ and a fitted critic network $V(\cdot)$:
\begin{equation}
\label{gae_formula}
\hat{A}_t=\sum_{k=0}^\infty (\gamma\lambda)^k \delta_{t+k},\quad \text{where}\quad  \delta_t=r(s_t,a_t)+\gamma V(s_{t+1})-V(s_t). 
\end{equation}

\subsection{Group Relative Policy Optimization (GRPO)}
Building upon PPO while streamlining the architecture, GRPO \cite{shao2024deepseekmath} eliminates the critic network $V(\cdot)$ to reduce computational overhead, estimating the baseline via group-wise statistics instead. For each query $x$, GRPO samples a group of $G$ outputs $\{\textbf{o}_i\}_{i=1}^G$ from the old policy $\pi_{\theta_{\text{old}}}$. Then the advantage $\hat{A}_{i,t}$ is computed by standardizing the outcome rewards within the group as
\begin{equation}
\label{group_adv_formula}
\hat{A}_{i,t}=\frac{r(x,\textbf{o}_i)-\text{mean}(\{r(x,\textbf{o}_j)\}_{j=1}^G)}{\text{std}(\{r(x,\textbf{o}_j)\}_{j=1}^G)}.
\end{equation}
The final objective of GRPO combines the clipped surrogate loss in PPO with a KL-divergence penalty, typically approximated via the k3 estimator \cite{shao2024deepseekmath}, to strictly regulate the policy update as
\begin{equation}
\label{grpo_objetive}
\mathcal{J}_{\text{GRPO}}(\theta) = \mathbb{E}_{\substack{x\sim\mathcal{Q} \\ \{\textbf{o}_i\}_{i=1}^G\sim\pi_{\theta_{\text{old}}}(\cdot|x)}}\Bigg[\frac{1}{G}\sum_{i=1}^G \frac{1}{|\textbf{o}_i|}\sum_{t=1}^{|\textbf{o}_i|} \min\Big(r_{i,t}(\theta)\hat{A}_{i,t}, \text{clip}(r_{i,t}(\theta), 1-\epsilon, 1+\epsilon)\hat{A}_{i,t}\Big)\Bigg],
\end{equation}
\textcolor{black}{where $r_{i,t}(\theta)=\frac{\pi_\theta(o_{i,t}|x,\textbf{\textbf{o}}_{i,<t})}{\pi_{\theta_{\text{old}}}(o_{i,t}|x,\textbf{\textbf{o}}_{i,<t})}$.}

\section{Methodology}
In this section, motivated by the intuition of partial trust region preservation and grounded in the proximal assumption, we propose a novel and sample-efficient critic-free eligibility traces method tailored for the RLVR framework. Distinct from conventional eligibility traces, our formulation integrates the importance weights of future tokens into the trace coefficients and simultaneously sparsifies the traces based on token entropy to accomplish selective credit assignment.
\textcolor{black}{\subsection{Reformulating PPO Gradient via Eligibility Traces}}
Recognizing that GAE \cite{schulman2015high} is structurally isomorphic to the $\text{TD}(\lambda)$ return mechanism \cite{sutton1988learning} applied to the advantage estimation, we derive Proposition \ref{prop:ppo_grad}, which explicitly reformulates the PPO policy gradient as actor eligibility traces. Proposition \ref{prop:ppo_grad} establishes the off-policy counterpart to Theorem 1 in \cite{parthasarathi2025grpo}. We defer the proof to Appendix \ref{proof:B.1}.
\begin{proposition}[\textcolor{black}{Eligibility Traces Formulation of PPO}]
\label{prop:ppo_grad}
In the absence of the clipping mechanism and the minimization operator, the policy gradient of PPO is equivalent to the sum of the products of the policy eligibility traces and the temporal difference error. Formally, we have
\begin{align}
\label{ppo_grad}
\nabla_\theta \mathbb{E}\!_{\substack{x\sim\mathcal{Q} \\ \textbf{o}\sim\pi_{\theta_{\text{old}}}(\cdot|x)}} \left[\sum_{t=1}^{|\textbf{o}|} r_{t}(\theta)\hat{A}_{t} \right]\!=\!\mathbb{E}_{\substack{x\sim\mathcal{Q} \\ \textbf{o}\sim\pi_{\theta_{\text{old}}}(\cdot|x)}} \Bigg[\sum_{t=1}^{|\textbf{o}|}\delta_{t}e_t\Bigg],
\end{align}
where $e_t$ denotes \textcolor{black}{PPO eligibility traces} at time $t$ given by
\begin{equation}
e_t = \sum_{k=1}^{t}(\gamma\lambda)^{k-1}r_{\nu}(\theta)\nabla_\theta\log\pi_\theta(o_{\nu}|x,\textbf{o}_{<\nu})
= r_{t}(\theta)\nabla_\theta\log\pi_\theta(o_t|x,\textbf{o}_{<t}) + \gamma\lambda e_{t-1}
\end{equation}
with $\nu=t+1-k,\space t\in \big[1,2,\dots,|\textbf{o}|\big]$ and $e_0$ equals $0$.
\end{proposition}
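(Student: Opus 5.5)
The plan is to prove the identity by a direct, pathwise computation followed by an exchange of the order of summation. Throughout, the advantage estimates $\hat{A}_t$ (equivalently the TD errors $\delta_t$) are treated as constants with respect to $\theta$: they come from the behavioral policy and the fitted critic, not from $\pi_\theta$. The expectation is over $\mathbf{o}\sim\pi_{\theta_{\text{old}}}$, which does not depend on $\theta$. For a finite vocabulary and bounded horizon the expectation is a finite sum, so we may write
\[
\nabla_\theta \mathbb{E}\Big[\textstyle\sum_t r_t(\theta)\hat{A}_t\Big]=\mathbb{E}\Big[\textstyle\sum_t \hat{A}_t\nabla_\theta r_t(\theta)\Big].
\]
The log-derivative trick then gives $\nabla_\theta r_t(\theta)=r_t(\theta)\nabla_\theta\log\pi_\theta(o_t|x,\mathbf{o}_{<t})$. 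Abbreviating $g_t:=r_t(\theta)\nabla_\theta\log\pi_\theta(o_t|x,\mathbf{o}_{<t})$, the left-hand side becomes $\mathbb{E}[\sum_{t=1}^{T}\hat{A}_t g_t]$ with $T=|\mathbf{o}|$.

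Next, substitute the GAE expansion. On a finite episode, $\hat{A}_t=\sum_{k=0}^{T-t}(\gamma\lambda)^k\delta_{t+k}$, using the convention that $\delta$ vanishes after termination, i.e.\ $V(s_{T+1})=0$ and no further rewards. This gives
\[
\sum_{t=1}^{T}\hat{A}_t g_t=\sum_{t=1}^{T}\sum_{j=t}^{T}(\gamma\lambda)^{j-t}\delta_j g_t .
\]
Swapping the order of summation over the triangle $1\le t\le j\le T$ turns this into $\sum_{j=1}^{T}\delta_j\sum_{t=1}^{j}(\gamma\lambda)^{j-t}g_t$.

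It remains to identify the inner sum with $e_j$. Reindexing with $\nu=j+1-k$, so that $k=j-t+1$ runs from $1$ to $j$, gives exactly $\sum_{k=1}^{j}(\gamma\lambda)^{k-1}g_\nu$, which is the closed form in the statement. The recursive form $e_j=g_j+\gamma\lambda e_{j-1}$ with $e_0=0$ follows by splitting off the $k=1$ term and factoring $\gamma\lambda$ out of the rest, or equivalently by a one-line induction. Taking expectations of the pathwise identity completes the argument.

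The only delicate step is the bookkeeping of the truncation. The infinite sum in (\ref{gae_formula}) must be cut at the terminal step so that the swap of sums is over a finite triangle. We also need the convention that $\hat{A}_t$ and $\delta_t$ carry no $\theta$-dependence, i.e.\ a stop-gradient on the critic and the rewards. Both should be stated explicitly as assumptions of the proposition. If the horizon is not bounded, interchanging the gradient with the expectation requires a dominated-convergence argument, for example under bounded rewards and bounded score functions. I would mention this but not belabor it.
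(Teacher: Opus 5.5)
Your proposal is correct and follows the same route as the paper. Both arguments move the gradient inside the $\theta$-independent expectation, use $\nabla_\theta r_t(\theta)=r_t(\theta)\nabla_\theta\log\pi_\theta(o_t|x,\mathbf{o}_{<t})$, expand the truncated GAE sum, and regroup by $\delta_j$ to identify $e_j$. Your explicit triangular swap of summation and your stated conventions (stop-gradient on $\hat{A}_t$ and $\delta_t$, truncation at $|\mathbf{o}|$) make rigorous what the paper does by writing out the first few terms.
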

What Proposition \ref{prop:ppo_grad} essentially reveals is that PPO functions as an eligibility traces method akin to \cite{10.5555/645527.657471}. \textcolor{black}{However, it is distinctively characterized by the finite-horizon nature of the RLVR setting, the reliance on importance weights for off-policy correction and the utilization of $\gamma\lambda$ as the trace decay factor (as opposed to $\gamma$ in \cite{10.5555/645527.657471}).}
Nevertheless, Eq. \eqref{ppo_grad} remains dependent on a critic network for calculating the TD error $\delta_t$. To this end, following the same approximation principle adopted in GRPO, we replace the TD error $\delta_t$ with the trajectory-level group advantage $\hat{A} = \delta_{1}$, for which Lemma 1 in \cite{parthasarathi2025grpo} provides an explicit upper bound. Accordingly, we arrive at the policy gradient estimator for our \textcolor{black}{actor-only eligibility traces within critic-free RLVR framework as}
\begin{equation}
\label{critic_free_actor_only_et_pg}
\begin{aligned}
\sum_{t=1}^{|\textbf{o}_i|}\hat{A}_{i}\Big[r_{i,t}(\theta)\nabla_\theta\log\pi_\theta(o_{i,t}|x,\textbf{o}_{i,<t})+\gamma\lambda e_{i,t-1}\Big].
\end{aligned}
\end{equation}

\subsection{Proximal Policy Eligibility Traces (P-trace)}
Building upon this approximation, the central idea of our method lies in further approximating the historical importance weights $r_{i,<t}(\theta)$ within the eligibility traces term in Eq.~\eqref{critic_free_actor_only_et_pg} by the current importance weight $r_{i,t}(\theta)$, thereby allowing it to be factored out. This approximation is justified under a ``proximal assumption'' that the off-policy deviation remains nearly uniform across tokens, which not only simplifies the computation but also explicitly couples each token at time $t$ with its subsequent tokens, leading to improved sample efficiency as discussed in Appendix~\ref{app:gradient_comparison}. Intuitively, this assumption facilitates a mechanism we term \textit{partial trust region preservation}. In standard clipped objectives, an out-of-bound importance ratio $r_{i,t}(\theta)$ nullifies the gradient for the current token $o_{i,t}$. However, if a subsequent token $o_{i,t+\Delta}$, which is causally dependent on $o_{i,t}$, maintains an unclipped ratio $r_{i,t+\Delta}(\theta) \in (1-\epsilon, 1+\epsilon)$, it still preserves a valid learning signal. By embedding $r_{i,t+\Delta}(\theta)$ into the trace coefficient of $o_{i,t}$, our approach salvages these uncorrupted downstream signals to effectively update critical ancestral tokens. Under this approximation, Eq.~\eqref{critic_free_actor_only_et_pg} admits a simplified form within the group-based learning framework, yielding
\begin{equation}
\label{ptrace_grad}
\frac{1}{G}\sum_{i=1}^G \frac{1}{|\textbf{o}_i|}\sum_{t=1}^{|\textbf{o}_i|}r_{i,t}(\theta)\hat{A}_{i,t} \times \bigg[\sum_{k=1}^t(\gamma\lambda)^{k-1} \nabla_\theta\log\pi_{\theta}(o_{i,t+1-k}|x,\textbf{o}_{i,<t+1-k})\bigg].
\end{equation}

To formulate a surrogate objective that yields the gradient equal to Eq.~\eqref{ptrace_grad}, we design the \textit{eligible importance weight} (EIW) as
\begin{equation}
\label{eligible_importance_weight}
\tilde{r}_{i,t}(\theta)=\text{sg}\bigg[\frac{r_{i,t}(\theta)}{r^\lambda_{i,t}(\theta)}\bigg]\cdot r^\lambda_{i,t}(\theta),
\end{equation}
where $\text{sg}[\cdot]$ denotes the stop-gradient operator, and $\lambda$-\textit{importance weight} $r^\lambda_{i,t}(\theta)$ is defined as
\begin{equation}
\label{lambda_importance_weight}
r^\lambda_{i,t}(\theta)=\prod_{k=1}^t\frac{\pi_{\theta}(o_{i,t+1-k}|x,\textbf{o}_{i,<t+1-k})^{(\gamma\lambda)^{k-1}}}{\pi_{\theta_{\text{old}}}(o_{i,t+1-k}|x,\textbf{o}_{i,<t+1-k})^{(\gamma\lambda)^{k-1}}}.
\end{equation}
EIW remains numerically identical to standard importance weight by harnessing the stop-gradient operator, while distinctively inducing the required eligibility traces structure upon differentiation. Furthermore, EIW can be conceptualized as a sequence-level importance weight analogous to that of GSPO \cite{zheng2025group}, distinguished fundamentally by its incorporation of a recency-based structure featuring temporal decay rather than a uniform distribution, with comprehensive details and corresponding empirical results elaborated in Appendix \ref{app:connection_gspo_trace}.

By substituting Eq.\eqref{eligible_importance_weight} for the standard importance weight $r_{i,t}(\theta)$ within Eq.\eqref{grpo_objetive}, we formally derive our proposed approach, which we term \textit{Proximal Policy Eligibility Traces}, abbreviated as P-trace. P-trace serves as a natural extension of GRPO, since setting $\lambda=0$ will recover the original GRPO formulation. By synthesizing gradient dependencies from historical actions, P-trace facilitates holistic trajectory updates. Eligibility traces in P-trace establish a temporal coupling absent in GRPO through retroactive updates, thereby significantly amplifying the sample efficiency. A detailed comparison between P-trace and the closely related GRPO($\lambda$) is provided in Appendix \ref{app:gradient_comparison}.

\subsection{Selective Eligibility Traces (S-trace)}
While eligibility traces are instrumental for high sample efficiency, they inherently risk indiscriminate credit propagation. 
Aligning with a broader philosophy of selective mechanism design~\cite{abbas2020selective,anand2021preferential,DBLP:journals/corr/abs-2202-09699}, we propose \textit{Selective Eligibility Traces} (S-trace) to reconcile this trade-off. 
Specifically, we filter updates based on the intra-rollout token entropy by retaining only the top $\rho$ fraction of tokens, where $\rho \in (0,1]$ denotes the \textit{selective rate}. 
This strategy targets high-entropy ``forking tokens''~\cite{wang2025beyond} that dominate the reasoning path, yielding the $(\lambda,\omega)$-\textit{importance weight} as
\begin{equation}
\label{lambda_omega_importance_weight}
r^{\lambda,\omega}_{i,t}(\theta)=\prod_{k=1}^t\frac{\pi_{\theta}(o_{i,\nu}|x,\textbf{o}_{i,<\nu})^{\omega_{i,\nu}(\gamma\lambda)^{k-1}}}{\pi_{\theta_{\text{old}}}(o_{i,\nu}|x,\textbf{o}_{i,<\nu})^{\omega_{i,\nu}(\gamma\lambda)^{k-1}}},
\end{equation}
where $\nu=t+1-k$. The selective mask $\omega_{i,\nu}$ is set to $0$ if the entropy of token $o_{i,\nu}$ falls within the bottom $(1-\rho)$ fraction of the $i$-th rollout, aligning with the empirical 80/20 rule (i.e., setting $\rho$ to be $0.2$)~\cite{wang2025beyond}. Hence, under this formulation, the eligibility trace term in S-trace corresponding to token $o_{i,t}$ takes the form as
\begin{equation}
\label{selective_ptrace_et}
e_{i,t} = \omega_{i,t}\nabla_\theta\log\pi_\theta(o_{i,t}|x,\textbf{o}_{i,<t}) + \gamma\lambda e_{i,t-1}.
\end{equation}
Eq.~\eqref{selective_ptrace_et} is related to the method in \cite{DBLP:journals/corr/abs-2202-09699}. The critical difference is that they employ eligibility traces for the critic, whereas we focus on the actor.

In practice, we adopt a \textit{leave-own-out} (LOWO) variant (for details of the LOWO variant, please refer to Appendices~\ref{app:lowo} and \ref{strace_interpretation}, respectively) for the implementation of $r_{i,t}^{\lambda, \omega}(\theta)$ to isolate the efficacy of sparse eligibility traces from sparse policy gradients and ensure the degeneration to GRPO as $\rho \to 0$. To theoretically ground this design, the following proposition formally compares the variance of S-trace against P-trace, demonstrating the roles of $\rho$ and $\lambda$ in regulating policy gradient variance.

\begin{proposition}[Variance Reduction via Selective Rate]
\label{prop:selective_variance_reduction}
Let $w_t = r_t(\theta)\hat{A}$ and $\mathbf{g}_t = \nabla_\theta\log\pi_\theta(o_t|x,\textbf{o}_{<t})$ denote the scaled advantage and the per-token policy gradient, respectively. The policy gradient estimators for P-trace and S-trace can be succinctly expressed as:
\begin{align}
    \mathbf{G}_t &= w_t \sum_{k=1}^t (\gamma\lambda)^{k-1} \mathbf{g}_{t+1-k}, \\
    \mathbf{G}_t^{\omega} &= w_t \left[ \mathbf{g}_t + \sum_{k=2}^t (\gamma\lambda)^{k-1} \omega_{t+1-k} \mathbf{g}_{t+1-k} \right].
\end{align}
Let expectations $\mathbb{E}[\cdot]$ and variances $\operatorname{Var}[\cdot]$ be evaluated over the joint distribution $P(\tau, \omega) = P_{\mathcal{Q}}(x) \pi_{\theta_{\text{old}}}(\mathbf{o}|x) P(\omega)$, where $\tau = (x, \mathbf{o})$ is the sampled trajectory, and the selective mask $\omega_t \in \{0, 1\}$ is an independent Bernoulli variable with $\mathbb{E}[\omega_t] = \rho$. Assuming the temporal gradients are mutually uncorrelated ($\operatorname{Cov}(w_t\mathbf{g}_i, w_t\mathbf{g}_j) \approx 0$ for $i \neq j\in[1,t]$), and the historical weighted gradients are zero-mean with uniform bounded variance ($\mathbb{E}[w_t\mathbf{g}_{k}] \approx 0$, $\operatorname{Var}[w_t\mathbf{g}_{k}] \approx \sigma^2$ for $k < t$), the variances of the respective estimators can be approximated as:
\begin{align}
    \operatorname{Var}[\mathbf{G}_t] &\approx \operatorname{Var}[w_t\mathbf{g}_t] + \sigma^{2} \sum_{k=1}^{t-1} (\gamma\lambda)^{2k}, \label{eq:var_ptrace} \\
    \operatorname{Var}[\mathbf{G}^\omega_t] &\approx \operatorname{Var}[w_t\mathbf{g}_t] + \rho\sigma^{2} \sum_{k=1}^{t-1} (\gamma\lambda)^{2k}. \label{eq:var_strace}
\end{align}
\end{proposition}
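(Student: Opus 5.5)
The plan is to decompose each estimator into the current-token term and a sum of historical terms. Then expand the variance of the sum using the stated uncorrelatedness assumption, so that only diagonal variance terms survive. Reindexing with $j = k-1$ gives
\begin{align*}
\mathbf{G}_t = w_t\mathbf{g}_t + \sum_{j=1}^{t-1}(\gamma\lambda)^{j}\, w_t\mathbf{g}_{t-j}, \qquad
\mathbf{G}_t^{\omega} = w_t\mathbf{g}_t + \sum_{j=1}^{t-1}(\gamma\lambda)^{j}\,\omega_{t-j}\, w_t\mathbf{g}_{t-j}.
\end{align*}
Variances of vector-valued quantities are read as traces of covariance matrices, i.e. $\operatorname{Var}[\mathbf{X}] = \mathbb{E}\|\mathbf{X}-\mathbb{E}\mathbf{X}\|^2$. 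Under this reading, $\operatorname{Var}[\sum_j \mathbf{X}_j] = \sum_j \operatorname{Var}[\mathbf{X}_j] + \sum_{i\neq j}\operatorname{Cov}(\mathbf{X}_i,\mathbf{X}_j)$.

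For P-trace, the cross terms between distinct indices vanish by the assumption $\operatorname{Cov}(w_t\mathbf{g}_i,w_t\mathbf{g}_j)\approx 0$. Each historical term contributes $(\gamma\lambda)^{2j}\operatorname{Var}[w_t\mathbf{g}_{t-j}] \approx (\gamma\lambda)^{2j}\sigma^2$. Summing over $j=1,\dots,t-1$ gives Eq.~\eqref{eq:var_ptrace} directly.

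For S-trace, I would handle the masked terms $\mathbf{Y}_j := \omega_{t-j}\, w_t\mathbf{g}_{t-j}$ using independence of $\omega$ from $\tau$ and $\omega_{t-j}^2=\omega_{t-j}$. This gives $\mathbb{E}[\mathbf{Y}_j] = \rho\,\mathbb{E}[w_t\mathbf{g}_{t-j}]\approx 0$. It also gives $\mathbb{E}\|\mathbf{Y}_j\|^2 = \rho\,\mathbb{E}\|w_t\mathbf{g}_{t-j}\|^2$. Since the mean is approximately zero, the latter equals $\rho(\sigma^2 + \|\mathbb{E}[w_t\mathbf{g}_{t-j}]\|^2)\approx\rho\sigma^2$, so $\operatorname{Var}[\mathbf{Y}_j]\approx\rho\sigma^2$.

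The main obstacle is the cross-covariances, both between $\mathbf{Y}_i$ and $\mathbf{Y}_j$ and between $w_t\mathbf{g}_t$ and $\mathbf{Y}_j$. I would proceed as follows.
\begin{itemize}
\item Between two masked terms, independence of the masks gives $\mathbb{E}[\mathbf{Y}_i^\top\mathbf{Y}_j] = \mathbb{E}[\omega_{t-i}\omega_{t-j}]\,\mathbb{E}[(w_t\mathbf{g}_{t-i})^\top w_t\mathbf{g}_{t-j}]$. The mask factor is $\rho^2$ when $i\neq j$, and the trajectory factor is $\approx 0$ by the uncorrelatedness and zero-mean assumptions.
\item Between the current-token term and a masked term, the covariance equals $\rho\operatorname{Cov}(w_t\mathbf{g}_t, w_t\mathbf{g}_{t-j})$ up to mean corrections. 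Those corrections vanish because $\mathbb{E}[w_t\mathbf{g}_{t-j}]\approx 0$.
\end{itemize}

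Collecting the surviving diagonal terms, with weights $(\gamma\lambda)^{2j}$, gives Eq.~\eqref{eq:var_strace}. Comparing the two expressions shows that the historical variance contribution is scaled by exactly $\rho\le 1$. This reduction is monotone in $\rho$ and is governed by the geometric factor $\sum_{k=1}^{t-1}(\gamma\lambda)^{2k}$. In particular, it vanishes as $\lambda\to 0$, which recovers GRPO.

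I should note explicitly where approximations enter: every $\approx$ inherits its error from the zero-mean and uncorrelatedness assumptions. The Bernoulli step, by contrast, is exact given independence.
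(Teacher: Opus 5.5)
Your proposal is correct and follows the paper's proof. It uses the same split into the current-token term plus reindexed historical terms, drops cross terms via the uncorrelatedness assumption, and evaluates the masked variance through independence and $\mathbb{E}[\omega_j^2]=\rho$ to obtain $\rho\sigma^2$. Your explicit check that the masked cross-covariances reduce to $\rho$ (respectively $\rho^2$) times the unmasked ones fills in a step the paper only asserts by invoking the assumption. In fact $\operatorname{Cov}(w_t\mathbf{g}_t,\omega_{t-j}w_t\mathbf{g}_{t-j})=\rho\operatorname{Cov}(w_t\mathbf{g}_t,w_t\mathbf{g}_{t-j})$ holds exactly, so no mean corrections are needed there.
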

Given a selective rate $\rho \in (0, 1)$, Eq.~\eqref{eq:var_strace} demonstrates that the variance of S-trace is strictly smaller than that of P-trace. Consequently, it strategically interpolates between the low-variance, sample-inefficient GRPO (recovered as $\rho \to 0$) and the high-variance, sample-efficient P-trace (recovered as $\rho \to 1$). The complete proof of Proposition~\ref{prop:selective_variance_reduction} is deferred to Appendix~\ref{proof:B.2} and the complete S-trace algorithm is summarized in Algorithm~\ref{alg:selective_ptrace}.

\section{Experiments}
In this section, we empirically evaluate the proposed S-trace method. We focus on mathematical reasoning tasks. Methodologies validated here are readily transferable to other complex reasoning domains. All experiments are implemented within the veRL \cite{10.1145/3689031.3696075} framework.
\subsection{Experimental Setup}
\textcolor{black}{\textbf{Baselines.}}
We compare our method against the standard GRPO \cite{shao2024deepseekmath} and closely related method GRPO($\lambda$) \cite{parthasarathi2025grpo}. For the latter, we specifically instantiate the trace-style variant and exclude ``both''-type decay and the advantage clamping trick used in \cite{parthasarathi2025grpo} to isolate the core contribution of the eligibility traces. Furthermore, to broaden our evaluation across different optimization paradigms, we include OPO \cite{hao2025policy}, a representative on-policy algorithm integrated into veRL \cite{10.1145/3689031.3696075} training framework that approximates the optimal baseline to minimize policy gradient variance, serving to investigate whether strictly minimized variance necessarily translates to superior performance. We instantiate P/S-trace with $\lambda=0.9$, while evaluating GRPO($\lambda$) using $\lambda \in \{0.9, 0.99\}$. Further details regarding hyperparameter selection and sensitivity analysis are provided in Appendix \ref{app:exp_details}.

\textcolor{black}{\textbf{Models and Datasets.}}
For our main evaluations, we use the Qwen3 family \cite{yang2025qwen3}, in particular the Qwen3-1.7B and Qwen3-4B variants, as our policy backbones. For training, we utilize DAPO-Math-14k, the English subset of the DAPO-Math-17k dataset \cite{yu2025dapo}, to fine-tune the base policy models. We evaluate generalization performance across five benchmarks, i.e, MATH500 \cite{hendrycks2021measuring}, Minerva \cite{lewkowycz2022solving}, AMC23 \cite{li2024numinamath}, AIME24 and AIME25. Building upon these findings, we highlight the scalability and superior performance of our approach by extending the evaluation to the Qwen3-8B model, incorporating the exceptionally challenging BeyondAIME \cite{seed2025seed1} benchmark. Comprehensive training configurations are provided in Appendix \ref{app:subsec:setup}.

\begin{table*}[!phbt]
    \centering
    \small
    \renewcommand{\arraystretch}{0.9}
    \caption{Results on Qwen3-1.7B in terms of the pass@16 score across five mathematical reasoning benchmarks. The best performance is highlighted in \textbf{bold}, and the second-best is \underline{underlined}.}
    \label{tab:main_results_1_7b}
    \begin{tabular}{lcccccc}
        \toprule
        Qwen3-1.7B & MATH500 & AIME24 & AIME25 & AMC23 & Minerva & Avg. \\
        \midrule
        OPO \cite{hao2025policy} & 79.09 & 16.67 & \textbf{20.00} & 55.00 & \underline{24.34} & 39.02 \\
        GRPO \cite{shao2024deepseekmath} & \underline{81.34} & \underline{20.00} & 13.33 & \underline{57.50} & \textbf{27.44} & \underline{39.92} \\
        GRPO($\lambda$)-0.9 \cite{parthasarathi2025grpo} & 80.59 & 13.33 & 10.00 & \underline{57.50} & 23.01 & 36.89 \\
        GRPO($\lambda$)-0.99 \cite{parthasarathi2025grpo} & 79.19 & \underline{20.00} & \underline{16.67} & \textbf{60.00} & 21.97 & 39.69 \\
        S-trace-0.9 (ours) & \textbf{82.14} & \textbf{23.33} & \underline{16.67} & \underline{57.50} & 22.41 & \textbf{40.41} \\
        \bottomrule
    \end{tabular}

    \vspace{2ex} 
    
    \caption{Results on Qwen3-4B in terms of the pass@16 score across five mathematical reasoning benchmarks. The best performance is highlighted in \textbf{bold}, and the second-best is \underline{underlined}.}
    \label{tab:main_results}
    \begin{tabular}{lcccccc}
        \toprule
        Qwen3-4B & MATH500 & AIME24 & AIME25 & AMC23 & Minerva & Avg. \\
        \midrule
        OPO \cite{hao2025policy} & 84.42 & 25.48 & 32.06 & 70.00 & 28.50 & 48.09 \\
        GRPO \cite{shao2024deepseekmath} & 85.04 & \underline{33.33} & 26.67 & 70.00 & 30.02 & 49.01 \\
        GRPO($\lambda$)-0.9 \cite{parthasarathi2025grpo} & 84.87 & \textbf{38.83} & 26.66 & \textbf{80.00} & \underline{30.52} & \textbf{52.18} \\
        GRPO($\lambda$)-0.99 \cite{parthasarathi2025grpo} & \textbf{86.85} & 26.67 & 24.49 & \textbf{80.00} & 27.94 & 49.19 \\
        P-trace-0.9 (ours) & 84.23 & 23.33 & \textbf{36.20} & \underline{77.50} & 28.18 & 49.89 \\
        S-trace-0.9 (ours) & \underline{85.40} & 30.00 & \underline{32.18} & \textbf{80.00} & \textbf{33.28} & \underline{52.17} \\
        \bottomrule
    \end{tabular}
\end{table*}

\subsection{Main Results}
Results for Qwen3-4B and Qwen3-1.7B\footnote{We omit the evaluation results for P-trace-0.9 in Table~\ref{tab:main_results_1_7b} due to training instability induced by extreme gradient norm spikes. We analyze this phenomenon in Appendices~\ref{app:subsec:instability_analysis} and \ref{app:gradient_comparison}.} are summarized in Tables \ref{tab:main_results} and \ref{tab:main_results_1_7b}, respectively, while their training reward dynamics are visualized in Figure \ref{fig:reward_dynamics_combined}.
\begin{figure}[!t]
    \centering
    \captionsetup[subfigure]{labelfont=normalfont}
    
    \begin{subfigure}[b]{0.48\linewidth}
        \centering
        \includegraphics[width=\linewidth]{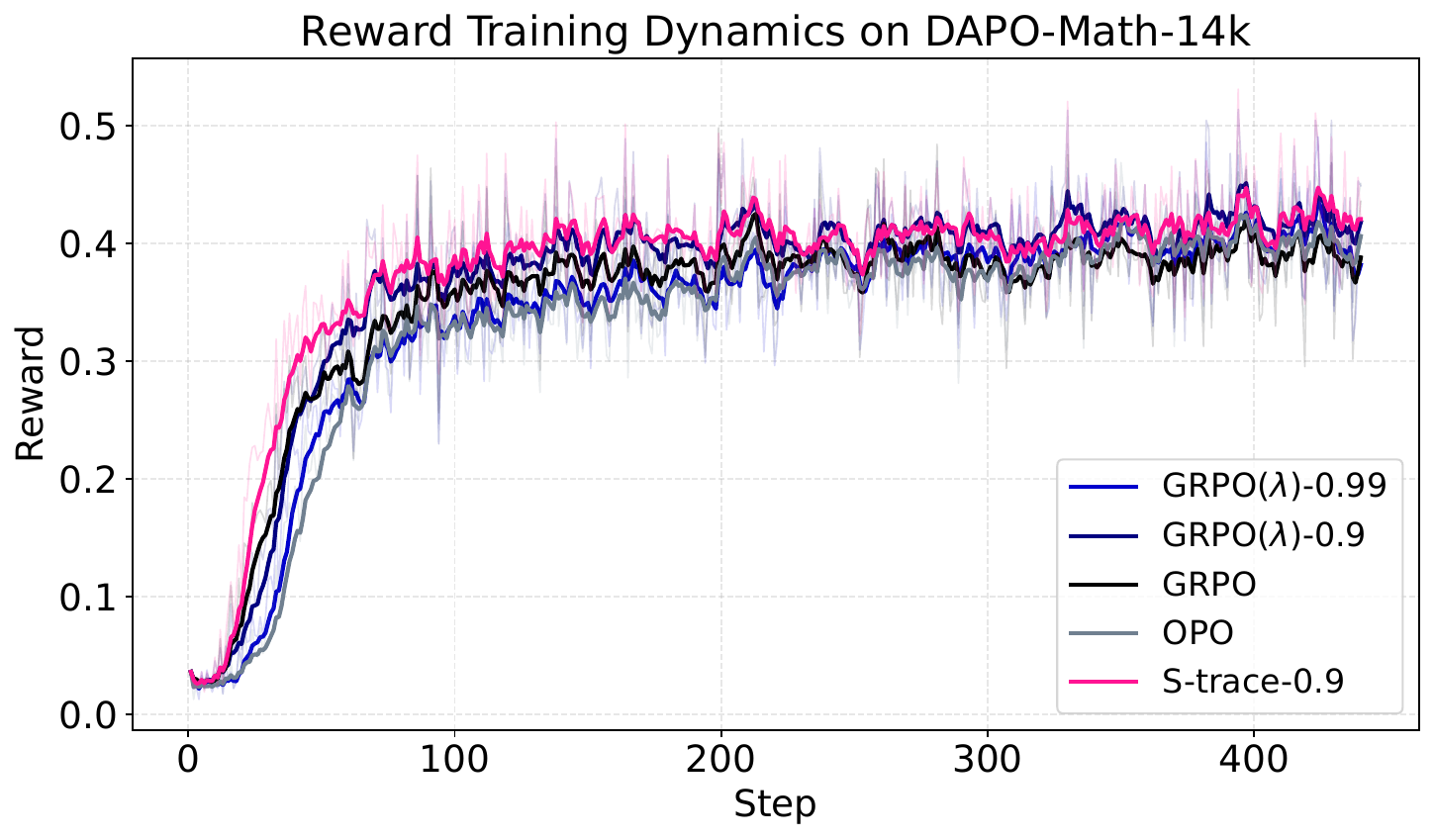}
        \caption{Qwen3-1.7B}
        \label{fig:qwen3-1_7b_reward_dynamics}
    \end{subfigure}\hfill
    \begin{subfigure}[b]{0.48\linewidth}
        \centering
        \includegraphics[width=\linewidth]{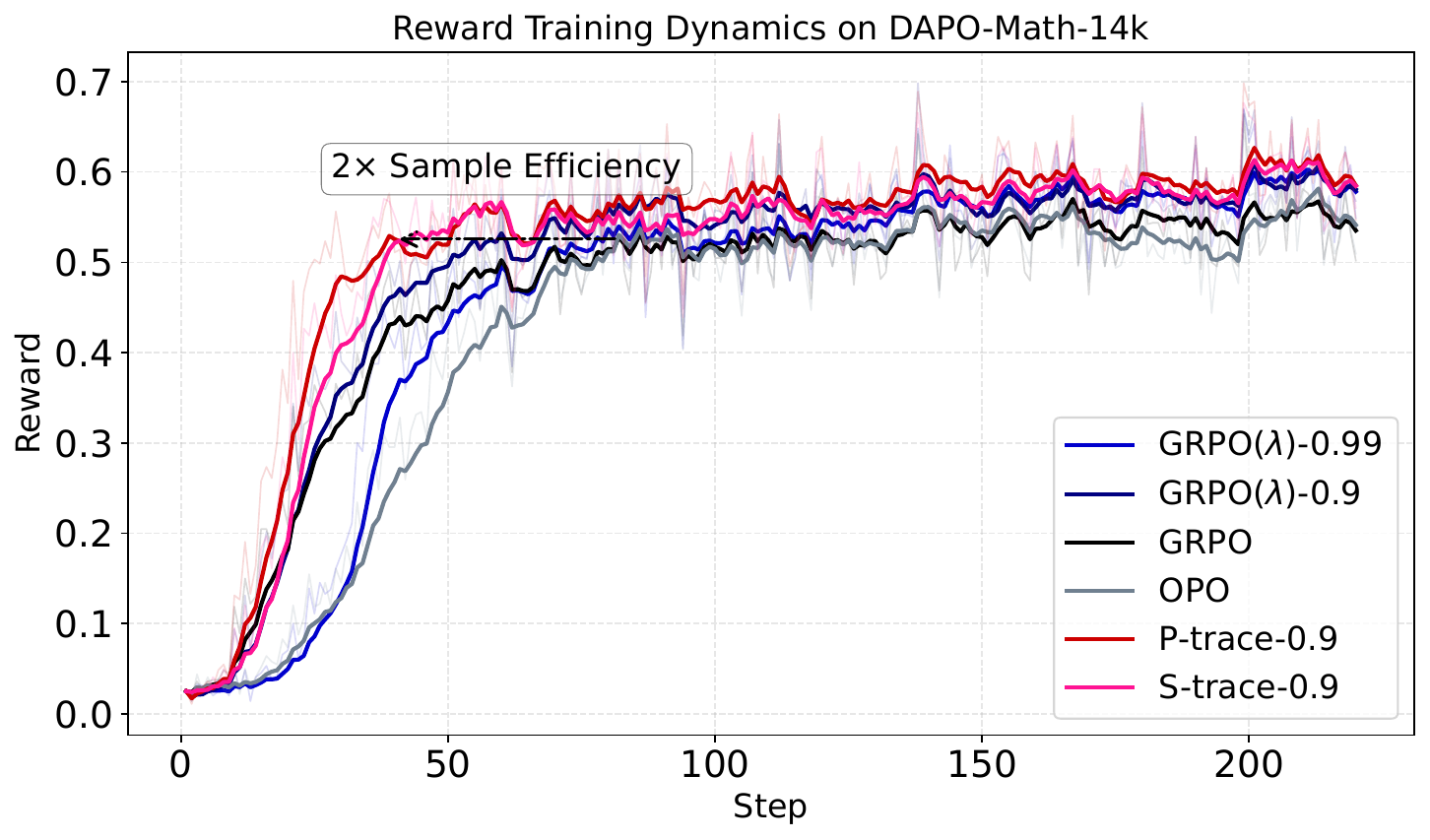}
        \caption{Qwen3-4B}
        \label{fig:qwen3-4b_reward_dynamics}
    \end{subfigure}
    
    \caption{\textbf{Reward training dynamics on DAPO-Math-14k.}
    (a) On Qwen3-1.7B, S-trace-0.9 marginally outpaces other methods in learning speed, converging to an asymptotic performance similar to GRPO($\lambda$)-0.9 while maintaining a consistent lead over GRPO and OPO.
    (b) Extending to Qwen3-4B, our proposed methods demonstrate markedly efficient learning compared to the baselines. Notably, our methods match the performance level where GRPO and OPO reach their plateau ($\sim$step 80) using approximately half the training iterations.}
    \label{fig:reward_dynamics_combined}
\end{figure}
\textcolor{black}{As illustrated in Figure \ref{fig:reward_dynamics_combined}, both P-trace-0.9 and GRPO($\lambda$)-0.9 exhibit faster convergence and superior asymptotic performance during training compared to GRPO and OPO. However, they do not exhibit a corresponding dominance in evaluation across all settings. A clear instance of this discrepancy is observed in the Qwen3-1.7B experiments (Table \ref{tab:main_results_1_7b}), where GRPO($\lambda$)-0.9 fails to surpass GRPO on any of the five benchmarks. This indicates that standard eligibility traces methods are prone to overfitting. This trend is further corroborated on Qwen3-4B, where GRPO($\lambda$)-0.99 underperforms GRPO($\lambda$)-0.9, suggesting that a larger $\lambda$ exacerbates the reinforcement of noisy tokens, thereby degrading generalization. These results underscore the fundamental challenge where dense eligibility traces, by indiscriminately reinforcing tokens via temporal proximity, risk capturing spurious training-specific heuristics.}

\textcolor{black}{In contrast, S-trace mitigates this by modulating eligibility traces sparsity based on information salience. As summarized in Tables~\ref{tab:main_results} and \ref{tab:main_results_1_7b}, S-trace shows robust effectiveness across varying model scales. On Qwen3-1.7B, S-trace-0.9 matches or surpasses GRPO and GRPO($\lambda$) on four out of five benchmarks, securing the highest overall average pass@16. This robustness extends to Qwen3-4B, where S-trace consistently outperforms OPO and exceeds GRPO on four out of five tasks. Furthermore, these gains are coupled with superior token efficiency (Figure~\ref{fig:qwen3_response_dynamics}). Although excluding gradients from low-entropy tokens slows convergence slightly relative to P-trace, S-trace retains a significant efficiency advantage over other baselines, empirically confirming its ability to reconcile the trade-off between optimization efficiency and generalization capabilities.}

\textbf{Remarks.} We hypothesize that shorter responses originate from a structural asymmetry where noisy tokens predominantly accumulate in the later stages of reasoning chains, a phenomenon that unlocks a mechanism we conceptualize as \textit{relative noise suppression}. Whereas uniform credit assignment perpetuates this late-stage noise by allocating identical massive gradient updates across all positions, recency-based methods such as P-trace, S-trace, and GRPO($\lambda$) aggressively reinforce early actions. Consequently, as policy gradients sharpen the action distribution, these heavily updated early tokens rapidly dominate the trajectory to effectively suppress late-stage noise, an advantage that S-trace further amplifies by utilizing an entropy-based selective mask to strategically concentrate learning signals on pivotal reasoning steps.

\begin{figure*}[t]

    \centering
    \captionsetup[subfigure]{labelfont=normalfont}
    \begin{subfigure}[b]{0.48\textwidth}
        \centering
        \includegraphics[width=\linewidth]{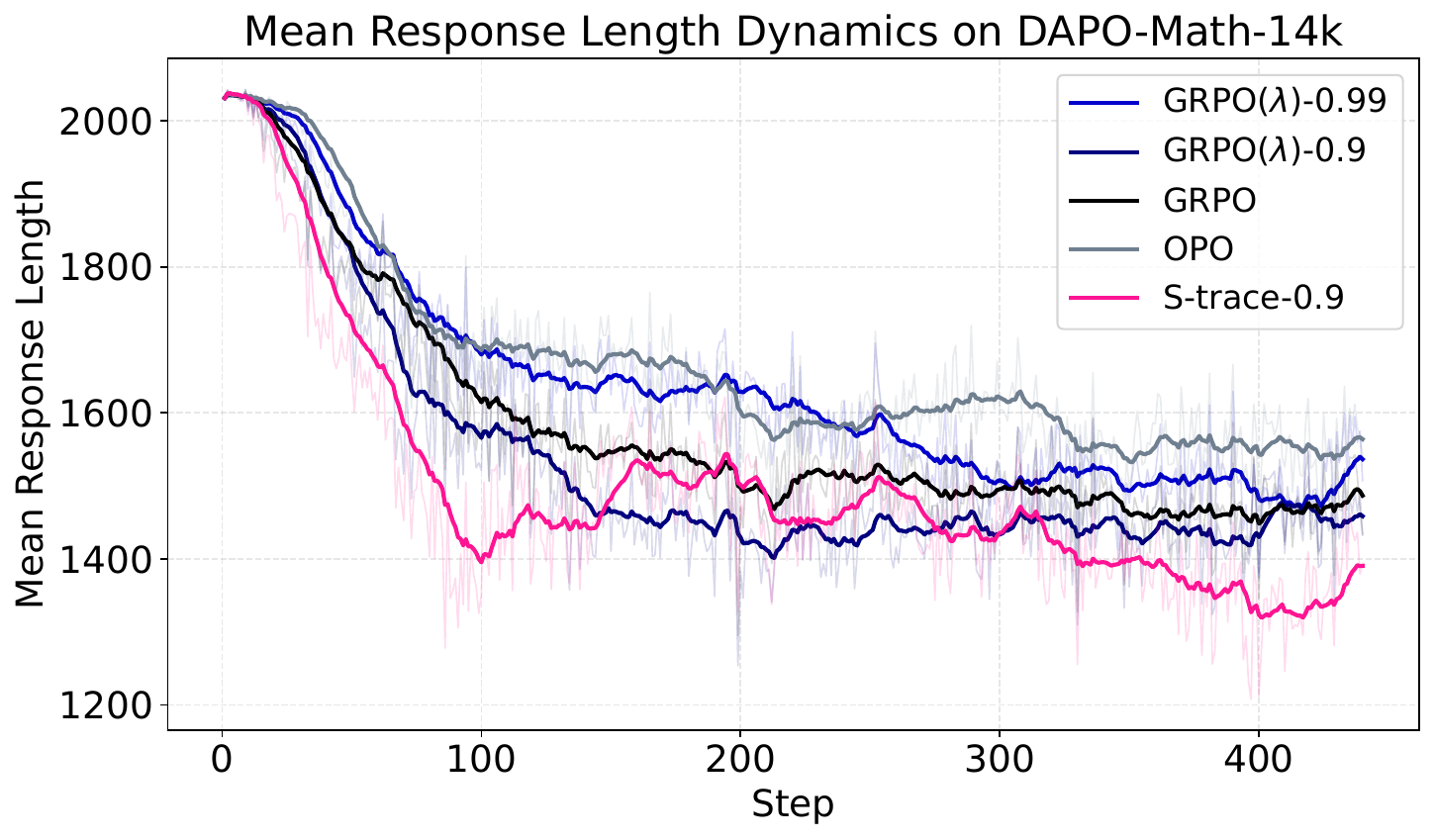}
        \caption{Qwen3-1.7B}
        \label{fig:qwen3-1_7b_response_dynamics}
    \end{subfigure}
    \begin{subfigure}[b]{0.48\textwidth}
        \centering
        \includegraphics[width=\linewidth]{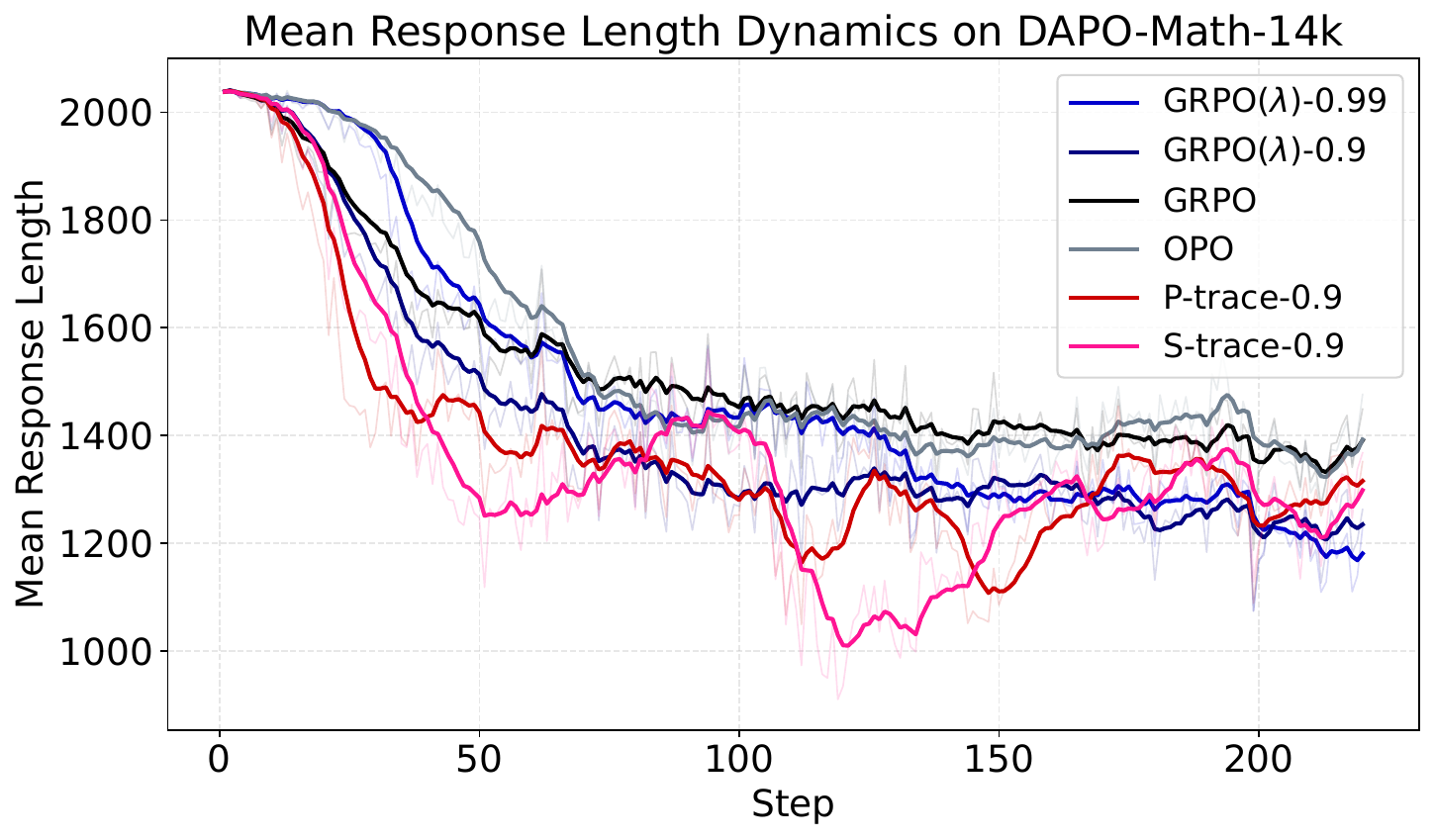}
        \caption{Qwen3-4B}
        \label{fig:qwen3-4b_response_dynamics}
    \end{subfigure}
        
    \caption{\textbf{Response training dynamics of Qwen3 models on DAPO-Math-14k.} 
    (a) On Qwen3-1.7B, S-trace-0.9 exhibits the best token efficiency overall.
    (b) On Qwen3-4B, methods incorporating eligibility traces demonstrate significant token efficiency. Although P/S-trace exhibits fluctuations, these methods consistently yield shorter reasoning trajectories than the GRPO and OPO baselines.}
    \label{fig:qwen3_response_dynamics}
\end{figure*}

\subsection{Scaling Properties}
To further explore the scalability of S-trace, we evaluate the fine-tuning performance of GRPO, GRPO($\lambda$)-0.9, and S-trace-0.9 on Qwen3-8B. The training dynamics and benchmark results are presented in Figure~\ref{fig:qwen3_8b_dynamics} (subplots \ref{fig:qwen3-8b_reward_dynamics} and \ref{fig:qwen3-8b_response_dynamics}) and Table~\ref{tab:scaling_results_8b}. In the 8B experiments, GRPO($\lambda$)-0.9 shows no discernible sample efficiency gains, as evidenced by a reward curve that aligns almost identically with the baseline GRPO, coupled with a response length that eventually surpasses it. Conversely, S-trace successfully preserves its high sample efficiency and asymptotic performance. Beyond achieving the best average accuracy across six benchmarks, S-trace maintains a significantly lower mean response length, evidencing strong token efficiency. These results indicate that the algorithmic advantages of S-trace scale highly effectively with model size.
\begin{table*}[!phbt]
    \centering
    \caption{Scaling results on Qwen3-8B in terms of the pass@16 score across six mathematical reasoning benchmarks. The best performance is highlighted in \textbf{bold}, and the second-best is \underline{underlined}.}
    \label{tab:scaling_results_8b}
    \resizebox{\linewidth}{!}{
    \begin{tabular}{lccccccc}
        \toprule
        Qwen3-8B & MATH500 & AIME24 & AIME25 & AMC23 & Minerva & BeyondAIME & Avg. \\
        \midrule
        
        GRPO \cite{shao2024deepseekmath} & 89.73 & 44.99 & \underline{38.81} & \textbf{95.68} & \textbf{40.93} & 17.68 & 54.64 \\

        GRPO($\lambda$)-0.9 \cite{parthasarathi2025grpo} & \underline{90.11} & \underline{48.24} & 38.62 & \underline{92.30} & 36.36 & \underline{27.61} & \underline{55.54} \\

        S-trace-0.9 (ours) & \textbf{90.49} & \textbf{54.82} & \textbf{42.07} & 90.62 & \underline{39.89} & \textbf{27.84} & \textbf{57.62} \\
        \bottomrule
    \end{tabular}
    } 
\end{table*}

\begin{figure*}[t]
    \centering 
    \captionsetup[subfigure]{labelfont=normalfont}
    
    \begin{subfigure}[b]{0.48\textwidth}
        \centering

        \includegraphics[height=4.5cm, keepaspectratio]{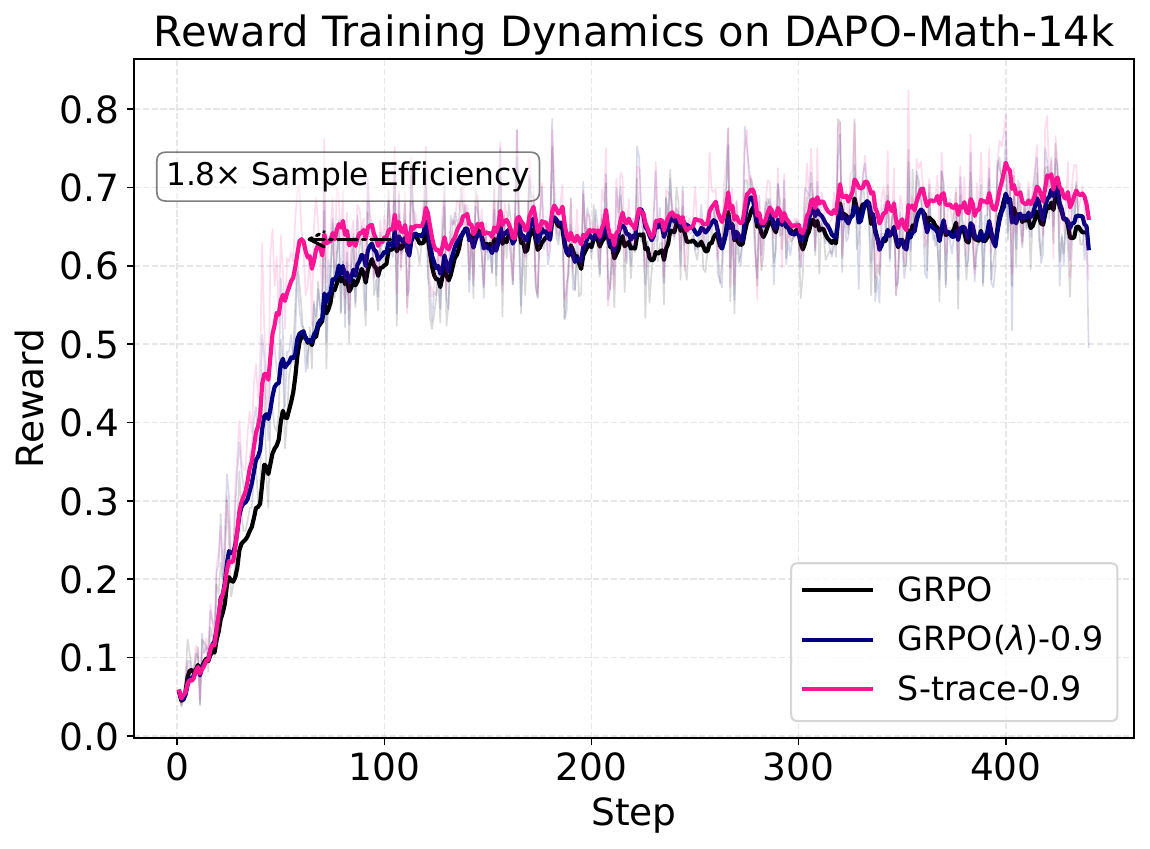}
        \caption{Reward}
        \label{fig:qwen3-8b_reward_dynamics}
    \end{subfigure}
    \begin{subfigure}[b]{0.48\textwidth}
        \centering

        \includegraphics[height=4.5cm, keepaspectratio]{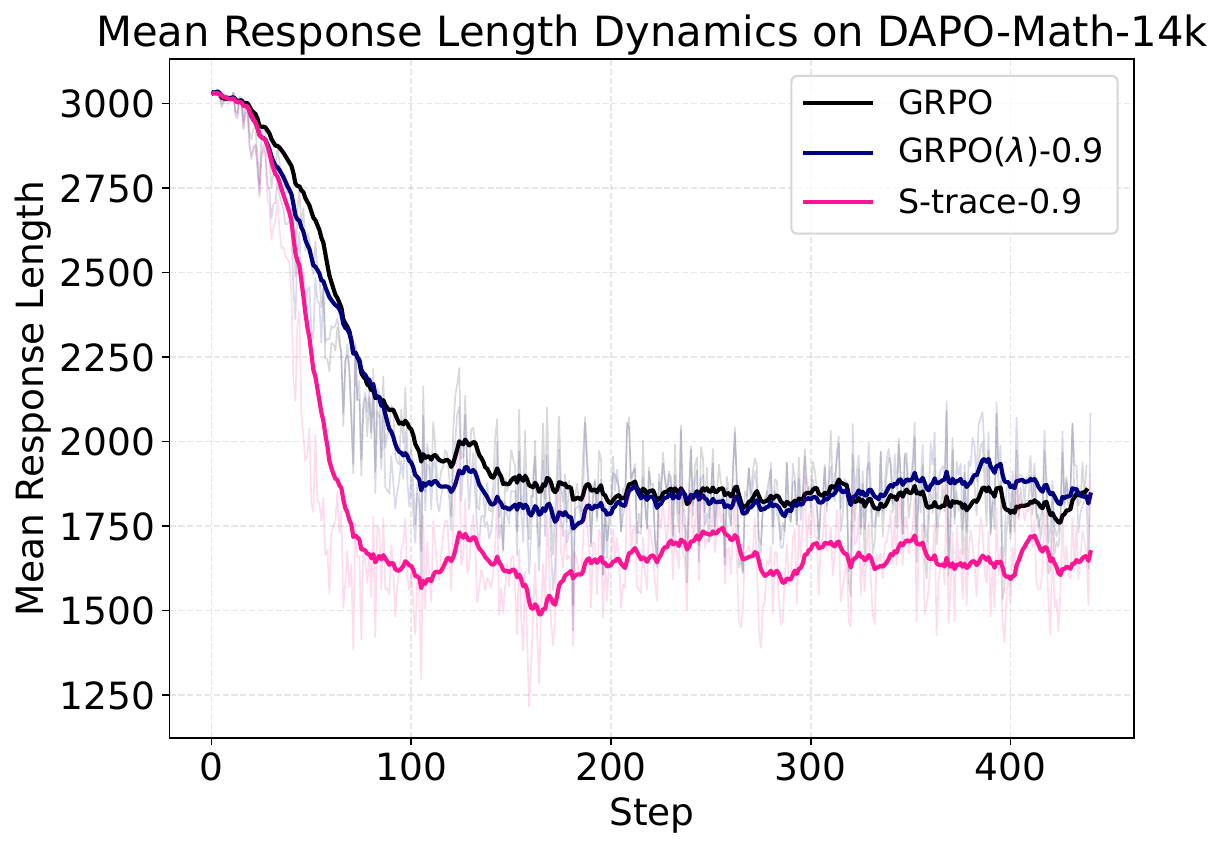}
        \caption{Response}
        \label{fig:qwen3-8b_response_dynamics}
    \end{subfigure}
        
    \caption{\textbf{Training dynamics of Qwen3-8B on DAPO-Math-14k.} 
    (a) The reward curve of GRPO($\lambda$)-0.9 visually overlaps with GRPO, showing no efficiency gains, whereas S-trace-0.9 sustains superior sample efficiency and asymptotic performance.
    (b) S-trace-0.9 consistently maintains a significantly lower mean response length than the baselines throughout training, demonstrating superior token efficiency.}
    \label{fig:qwen3_8b_dynamics}
\end{figure*}
    
        

\subsection{Analysis of Eligible Importance Weight}
\textcolor{black}{We empirically investigate the variance properties of the importance weights for our proposed  methods compared to the trace-style variant GRPO($\lambda$) \cite{parthasarathi2025grpo}, providing a detailed analysis of their optimization characteristics. We monitor the \emph{clip fraction}, defined as the average proportion of tokens within reasoning responses that trigger the clipping mechanism during policy gradient computation, as a proxy metric for the variance of importance weights. 
} \textcolor{black}{As shown in Figure \ref{fig:qwen3_pg_clipfrac_dynamics_combined}, the clip fraction of GRPO($\lambda$) consistently exceeds that of both P-trace and S-trace throughout the training process. This observation suggests that the eligible importance weights employed in P-trace and S-trace closely mirror the numerical behavior of the standard importance weights in GRPO for the majority of training steps.}

\textbf{Remarks.}
We posit that this property is a primary driver of the accelerated learning speed of P-trace compared to GRPO($\lambda$). Specifically, the use of eligible importance weights results in significantly fewer gradients being clipped during policy updates. This preservation of gradient information provides P/S-trace with richer learning signals, thereby further enhancing learning efficiency. Through the lens of regularization, the high clip fraction in GRPO($\lambda$) effectively functions as a mechanism similar to stochastic dropout~\cite{srivastava2014dropout}, imposing irregular and randomized credit assignment. In contrast, S-trace, inheriting the superior sample efficiency of P-trace, achieves a form of selective dropout via selective credit assignment. Both mechanisms mitigate over-exploitation to varying degrees, albeit through different pathways where the former relies on blind stochasticity and the latter is governed by informed, strategic selection. A detailed discussion of this property, along with an ablation study validating the entropy-based selective credit assignment, is provided in Appendices~\ref{app:gradient_comparison} and \ref{app:subsec:ablation}.
\begin{figure*}[!t]
    \centering
    \captionsetup[subfigure]{labelfont=normalfont}

    \begin{subfigure}[b]{0.4\textwidth}
        \centering
        \includegraphics[width=\textwidth]{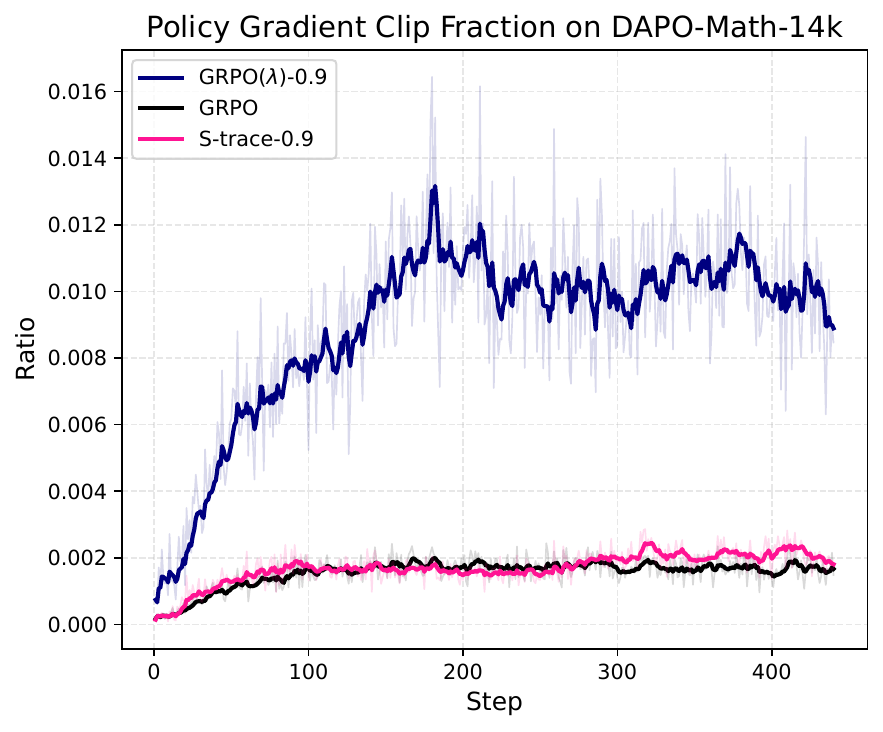}
        \caption{Qwen3-1.7B}
        \label{fig:pg_clip_1b}
    \end{subfigure}
    \begin{subfigure}[b]{0.4\textwidth}
        \centering
        \includegraphics[width=\textwidth]{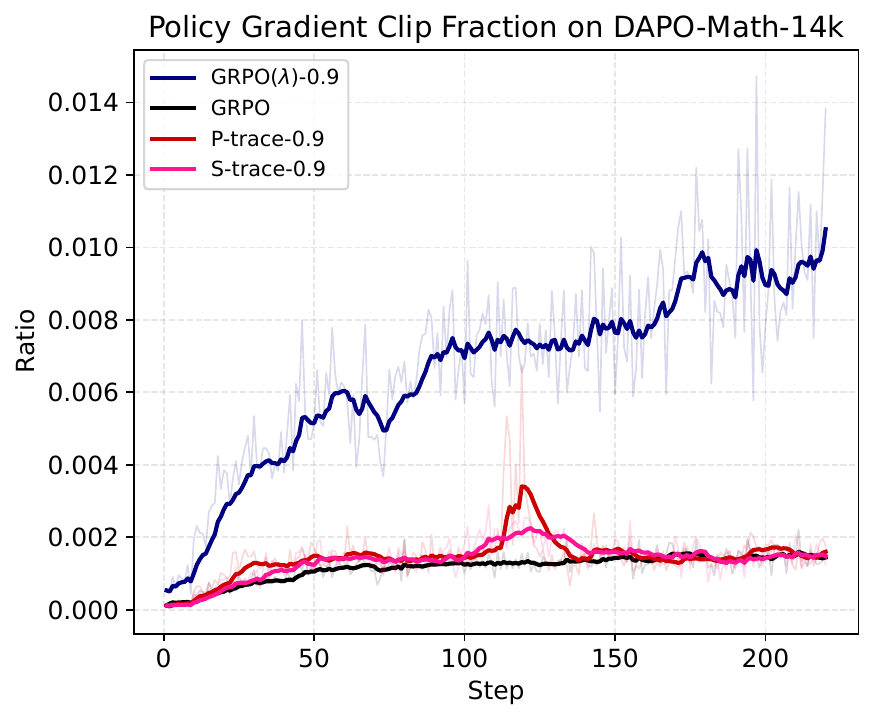}
        \caption{Qwen3-4B}
        \label{fig:pg_clip_4b}
    \end{subfigure}
    
    \caption{\textbf{Policy gradient clip fraction dynamics on DAPO-Math-14k.} 
    In both settings, GRPO($\lambda$) exhibits consistently higher clipping fraction compared to P/S-trace, indicating high variance in its importance weights. In contrast, P/S-trace maintains a low clipping profile, thereby retaining richer gradient signals for accelerated learning.}
    \label{fig:qwen3_pg_clipfrac_dynamics_combined}
\end{figure*}

\section{Conclusion and Future Work}

We propose S-trace as a critic-free eligibility traces method that leverages future token importance weights for sample efficiency and entropy-driven sparse eligibility traces for training stability, ultimately demonstrating the strictly superior sample and token efficiency of recency-based credit assignment over uniform paradigms within the RLVR framework.

Looking ahead, although S-trace partially mitigates the training instability inherent in the dense updates of P-trace, its reliance on partial trust region preservation inevitably compromises strict trust region boundaries and induces potential optimization instability. Consequently, future research must prioritize developing methods that concurrently optimize stability and sample efficiency through more theoretically principled construction of sparse eligibility traces. Furthermore, uncovering the fundamental mechanisms that enable recency-based credit assignment paradigms, exemplified by S-trace, to exhibit strictly superior token efficiency to uniform credit assignment is a highly compelling avenue for future exploration.

\bibliography{ref}
\bibliographystyle{unsrt}


\appendix





\newpage
\input{appendix.tex}

\end{document}

%% file: appendix.tex
\section{Implementation Details}
\label{app:exp_details}

In this section, we provide comprehensive details on the experimental setup, followed by an in-depth analysis of stability issues of P-trace and hyperparameter sensitivity.

\subsection{Experimental Setup and Hyperparameters}
\label{app:subsec:setup}
This section outlines the key hyperparameters and environmental configurations adopted in our experiments.

\begin{table}[h!]
\centering
\caption{Hyperparameters and experimental configurations for the experiments.}
\label{tab:hyperparameters}
\small 
\begin{tabular}{lrr}
\toprule
\textbf{Hyperparameter} & \textbf{Qwen3-1.7B/4B} & \textbf{Qwen3-8B} \\ 
\midrule
\multicolumn{3}{l}{\textbf{Optimization}} \\
\quad Optimizer & AdamW & AdamW \\
\quad Learning Rate & $1\times 10^{-6}$ & $1\times 10^{-6}$ \\
\quad Weight Decay & 0.01 & 0.01 \\
\quad LR Scheduler & Constant & Constant \\
\quad Batch Size & 128 & 64 \\
\quad Mini-batch Size & 32 & 16 \\
\midrule
\multicolumn{3}{l}{\textbf{Algorithm}} \\ 
\quad Maximum Prompt Length & 512 & 512 \\
\quad Maximum Response Length & 2048 & 3072 \\
\quad Group Size ($G$) & 5 & 8 \\
\quad Temperature ($\tau$) & 1.0 & 1.0 \\
\quad Discount Factor ($\gamma$) & 1.0 & 1.0 \\
\quad KL Coefficient ($\beta$) & 0.001 & 0.001 \\
\quad Clipping Range ($\epsilon$) & 0.2 & 0.2 \\
\quad Selective Rate ($\rho$) & 0.2 & 0.2 \\
\midrule
\multicolumn{3}{l}{\textbf{Infrastructure}} \\
\quad Parallelism Strategy & PyTorch FSDP2 & PyTorch FSDP2 \\
\quad Training Hardware & 2 $\times$ NVIDIA A100 GPU & 4 $\times$ NVIDIA A100 GPU \\
\bottomrule
\end{tabular}
\end{table}

\begin{figure}[H]
    \centering

    \begin{minipage}{1.0\textwidth} 
        \centering
        \captionsetup[subfigure]{labelfont=normalfont}
        
        \begin{subfigure}[b]{0.35\linewidth}
            \centering
            \includegraphics[width=\linewidth]{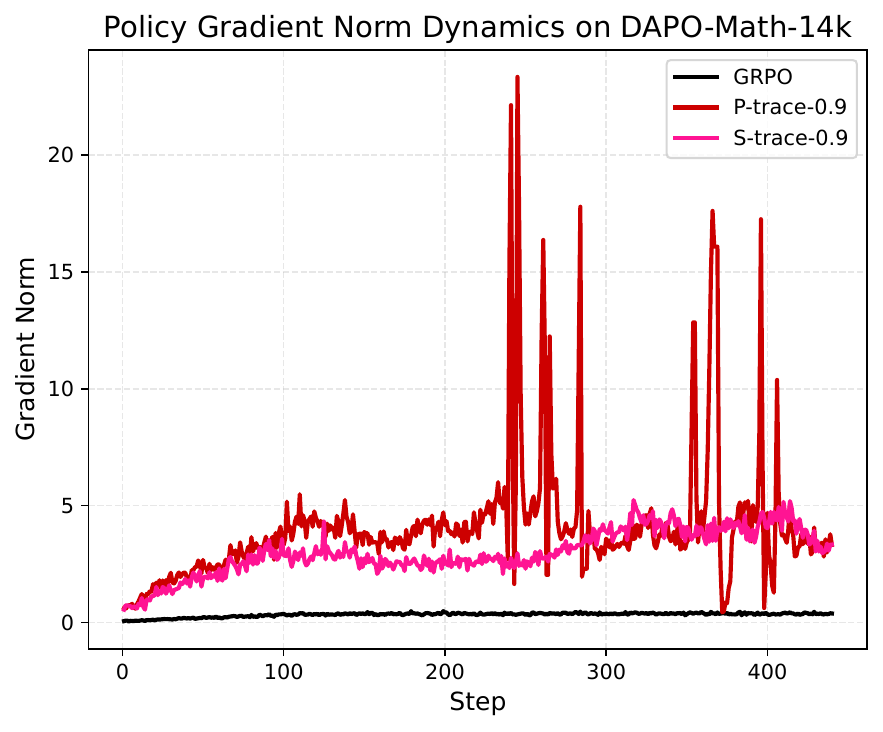}
            \caption{Gradient Norm}
            \label{fig:fail_gradnorm}
        \end{subfigure}
        \begin{subfigure}[b]{0.35\linewidth}
            \centering
            \includegraphics[width=\linewidth]{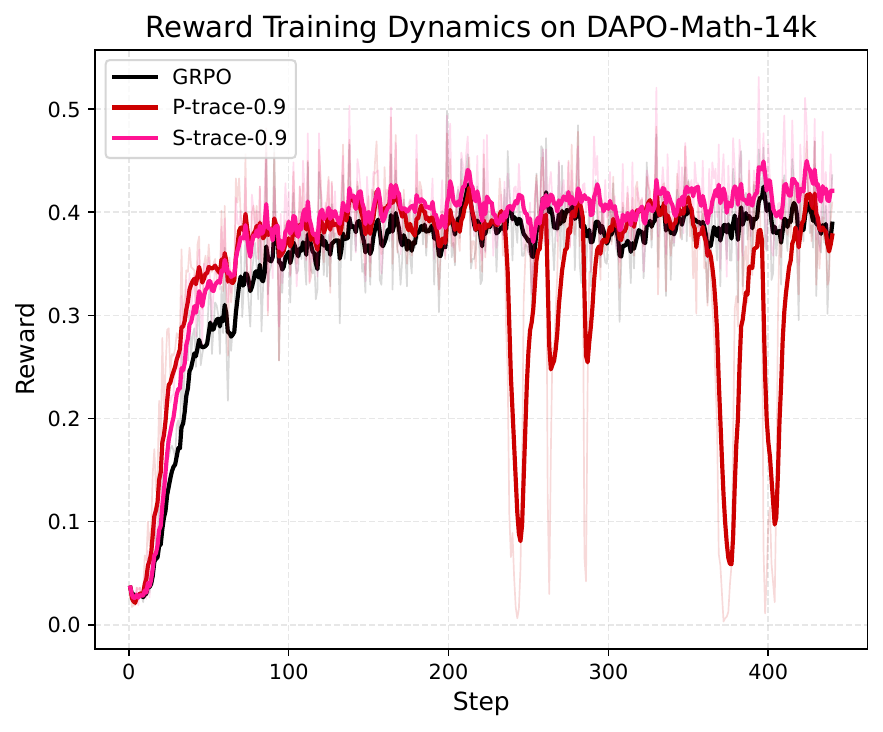}
            \caption{Training Reward}
            \label{fig:fail_reward_2}
        \end{subfigure}
        
    \end{minipage}
    \caption{\textbf{Training instability of P-trace on Qwen3-1.7B at $\lambda=0.9$.} S-trace-0.9 preserves the sample efficiency of P-trace-0.9 while resolving its optimization volatility through selective credit assignment.}
    \label{fig:failure_case_analysis_2}
\end{figure}

\subsection{Training Instability Analysis}
\label{app:subsec:instability_analysis}
In this work, we investigate eligibility traces methods under high temporal horizons (e.g., $\lambda \in \{0.9, 0.99\}$) to maximize their ability to capture long-term dependencies. However, our experiments reveal that P-trace is highly susceptible to training instability in these regimes, as illustrated in Figure \ref{fig:failure_case_analysis} and Figure \ref{fig:failure_case_analysis_2}, necessitating the design of the selective mechanism in S-trace. 

\begin{figure}[H]
    \centering
    \captionsetup[subfigure]{labelfont=normalfont}
    
    \begin{subfigure}[b]{0.35\textwidth}
        \centering
        \includegraphics[width=\linewidth]{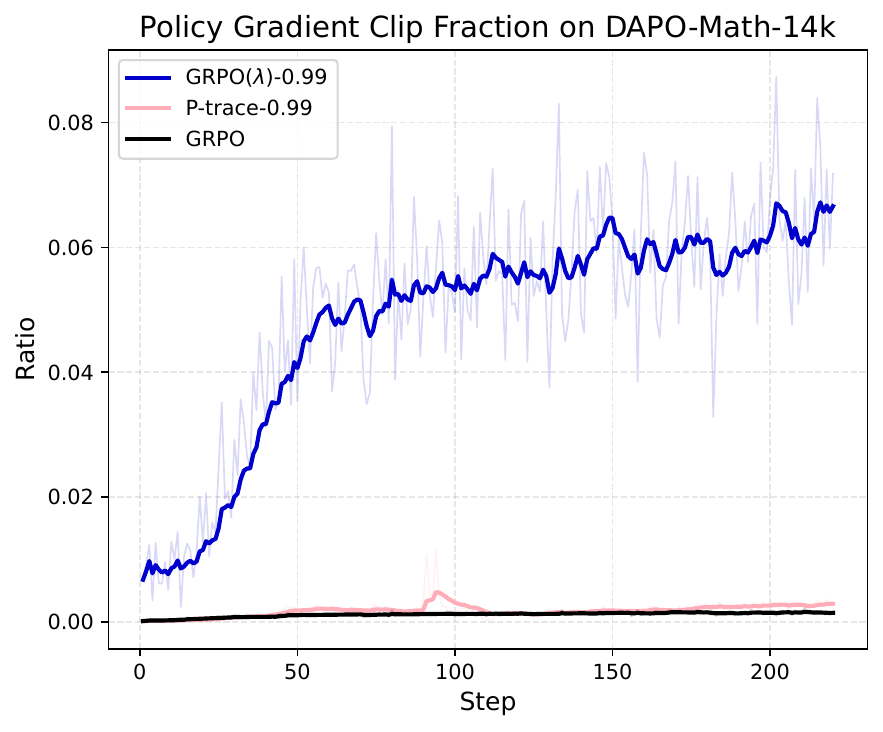}
        \caption{Policy Gradient Clip Fraction}
        \label{fig:fail_pg_clip}
    \end{subfigure}\hspace{0.5cm}
    \begin{subfigure}[b]{0.35\textwidth}
        \centering
        \includegraphics[width=\linewidth]{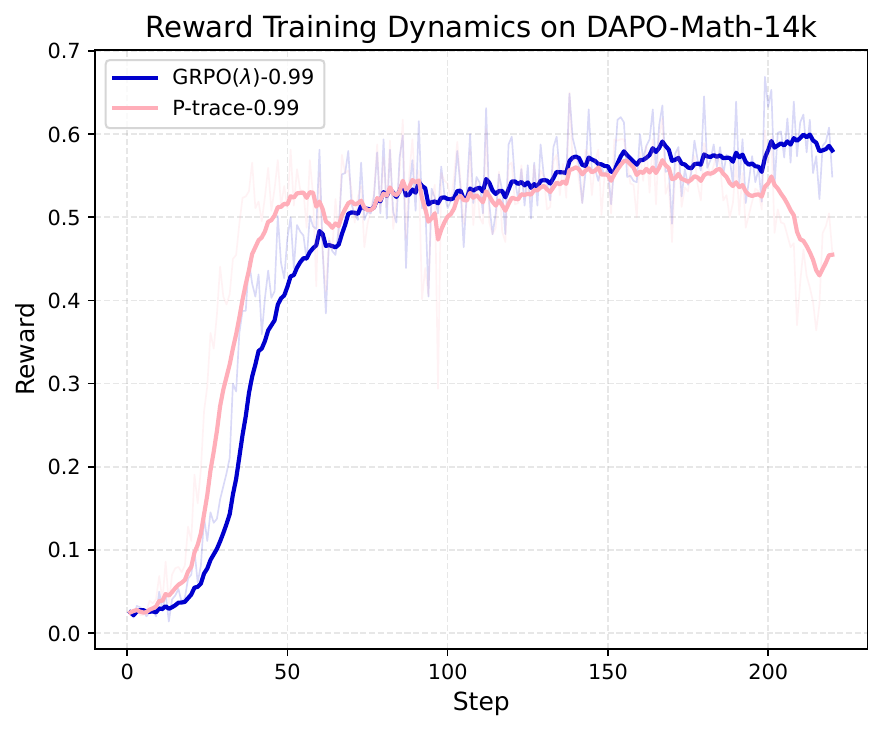}
        \caption{Training Reward}
        \label{fig:fail_reward}
    \end{subfigure}
    
    \vspace{0.1cm}
    
    \begin{subfigure}[b]{0.35\textwidth}
        \centering
        \includegraphics[width=\linewidth]{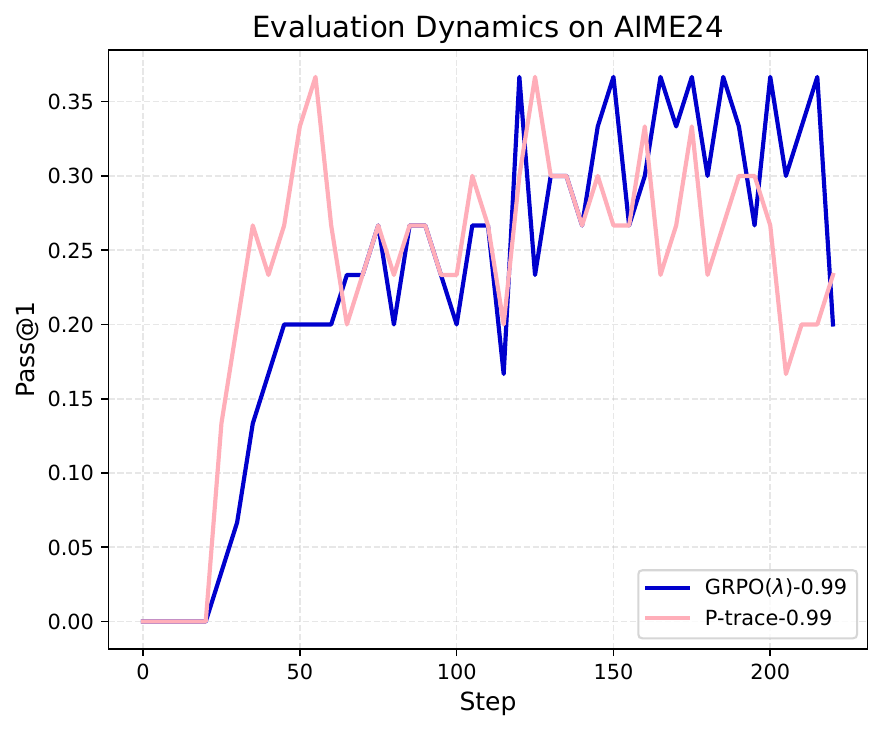}
        \caption{Pass@1 on AIME24}
        \label{fig:fail_aime24}
    \end{subfigure}\hspace{0.5cm}
    \begin{subfigure}[b]{0.35\textwidth}
        \centering
        \includegraphics[width=\linewidth]{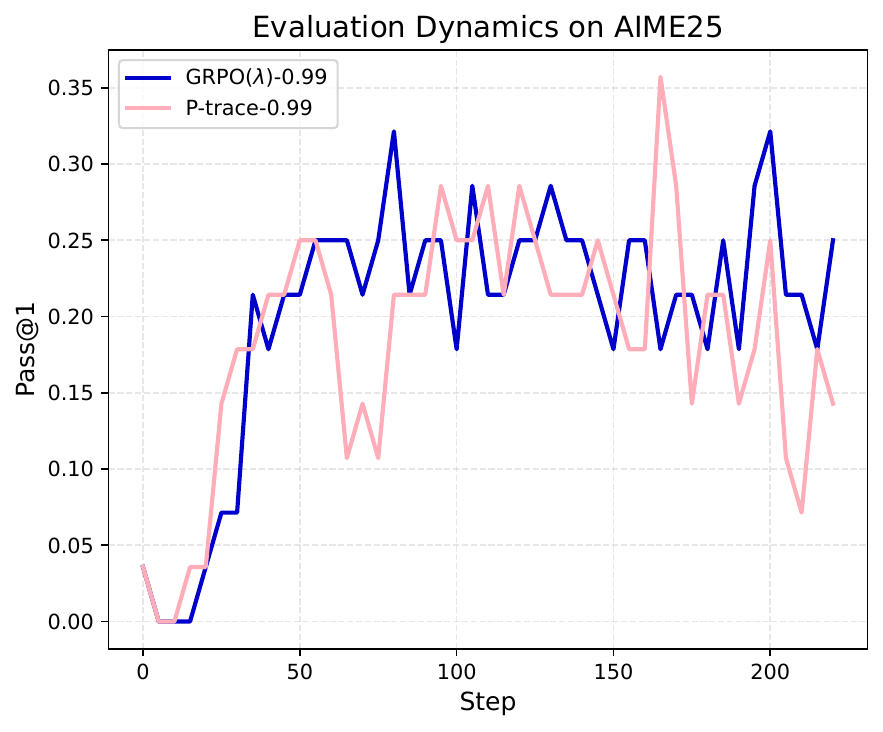}
        \caption{Pass@1 on AIME25}
        \label{fig:fail_aime25}
    \end{subfigure}
    
    \caption{\textbf{Training instability of P-trace on Qwen3-4B at $\lambda=0.99$.} P-trace exhibits severe training instability at this setting. In contrast, GRPO($\lambda$) maintains stability by inducing significantly higher clipping fractions (up to $24\times$ higher at step 200). This suggests that GRPO($\lambda$)'s inherent stochastic dropout acts as a smoothing regularizer that enhances training stability, albeit at the cost of sample efficiency.}
    \label{fig:failure_case_analysis}
\end{figure}

On Qwen3-4B, increasing $\lambda$ to $0.99$ results in severe training instability, as illustrated in Figure \ref{fig:failure_case_analysis}. To elucidate the underlying dynamics, we compared the clipping behavior of GRPO($\lambda$)-0.99 and P-trace-0.99 on the DAPO-Math-14k dataset. As evidenced in Figure \ref{fig:fail_pg_clip}, the clipping fraction associated with GRPO($\lambda$)-0.99 is substantially higher than that of P-trace-0.99, notably dwarfing the latter by nearly 24 times around step 200. We deem that the stochastic dropout mechanism inherent to the GRPO($\lambda$) formulation effectively attenuates the aggressive policy updates induced by eligibility traces, thereby functioning as a smoothing regularizer that stabilizes the training process. However, this stability is achieved at a cost. As depicted in Figure \ref{fig:fail_reward}, the consistently high clipping fraction compromises sample efficiency, whereas P-trace-0.99 conversely prioritizes efficiency at the expense of stability. 

Experiments on Qwen3-1.7B reveal that P-trace-0.9 suffers from similar instability issues. Although dense updates grant P-trace high sample efficiency, it lacks the stochastic regularization naturally present in GRPO($\lambda$). Consequently, the gradient norm is prone to explosive surges (Figure \ref{fig:fail_gradnorm}), leading to unstable optimization. S-trace effectively mitigates this issue by introducing selective dropout to construct sparse eligibility traces, effectively stabilizing training without sacrificing the long-term memory strength provided by $\lambda=0.9$. Figure \ref{fig:fail_reward_2} demonstrates that S-trace-0.9 successfully retains the fast learning speed of P-trace while achieving robust stability.

For a comprehensive discourse on this stability-efficiency trade-off in eligibility traces methods, we refer readers to the discussion in Appendix \ref{app:gradient_comparison}.

\subsection{Hyperparameter Sensitivity Studies}
\label{app:subsec:sensitivity}
To probe the sensitivity of the P-trace estimator regarding the hyperparameter $\lambda$, we extend our experiments on Qwen3-4B to include $\lambda \in \{0.7, 0.8\}$ as illustrated in Figure \ref{fig:hp_sensitivity_analysis}, discovering that these configurations maintain sample efficiency and evaluation performance virtually indistinguishable from the P-trace-0.9 baseline while simultaneously exhibiting significantly more stable clipping fractions, thereby underscoring the high robustness of the $\lambda$ parameter.
\begin{figure}[H]
    \centering
    \captionsetup[subfigure]{labelfont=normalfont}
    
    \begin{subfigure}[b]{0.35\textwidth}
        \centering
        \includegraphics[width=\linewidth]{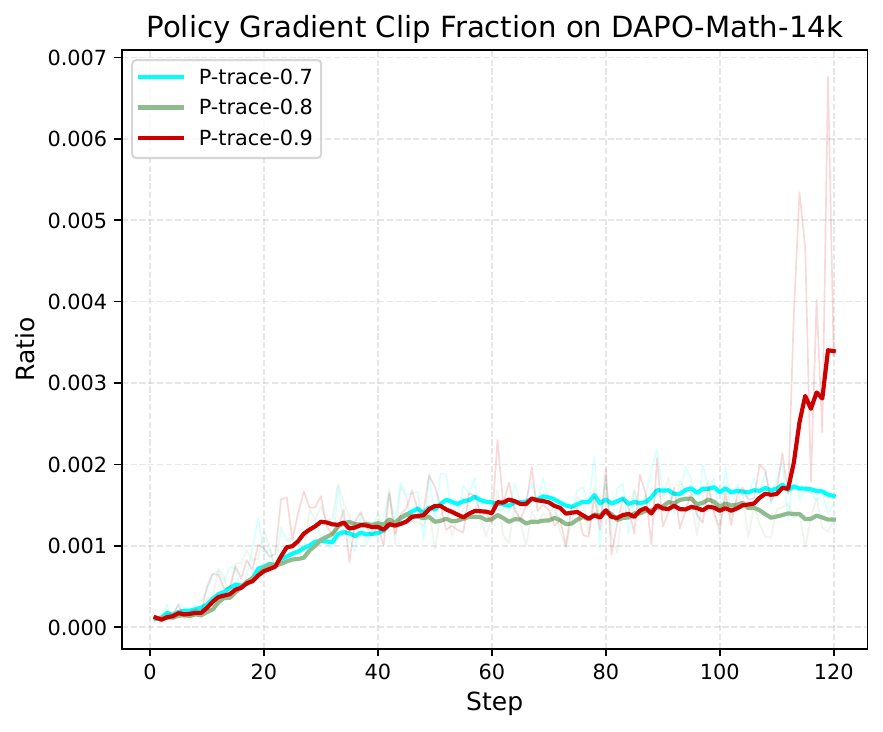}
        \caption{Policy Gradient Clip Fraction}
        \label{fig:hp_pg_clip}
    \end{subfigure}\hspace{0.5cm}
    \begin{subfigure}[b]{0.35\textwidth}
        \centering
        \includegraphics[width=\linewidth]{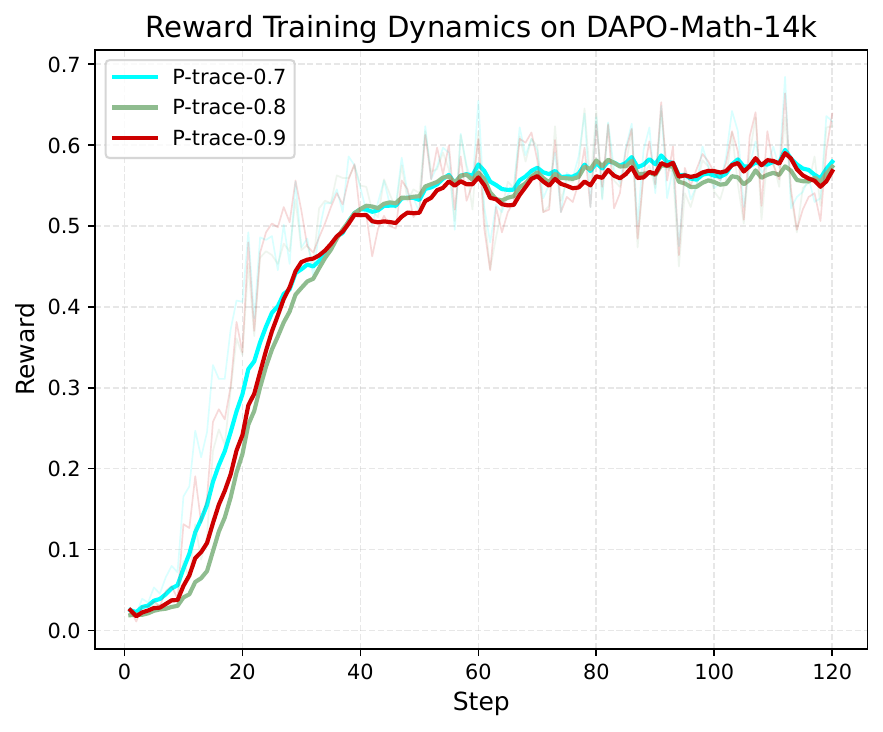}
        \caption{Training Reward}
        \label{fig:hp_reward}
    \end{subfigure}
    
    \vspace{0.1cm}
    
    \begin{subfigure}[b]{0.35\textwidth}
        \centering
        \includegraphics[width=\linewidth]{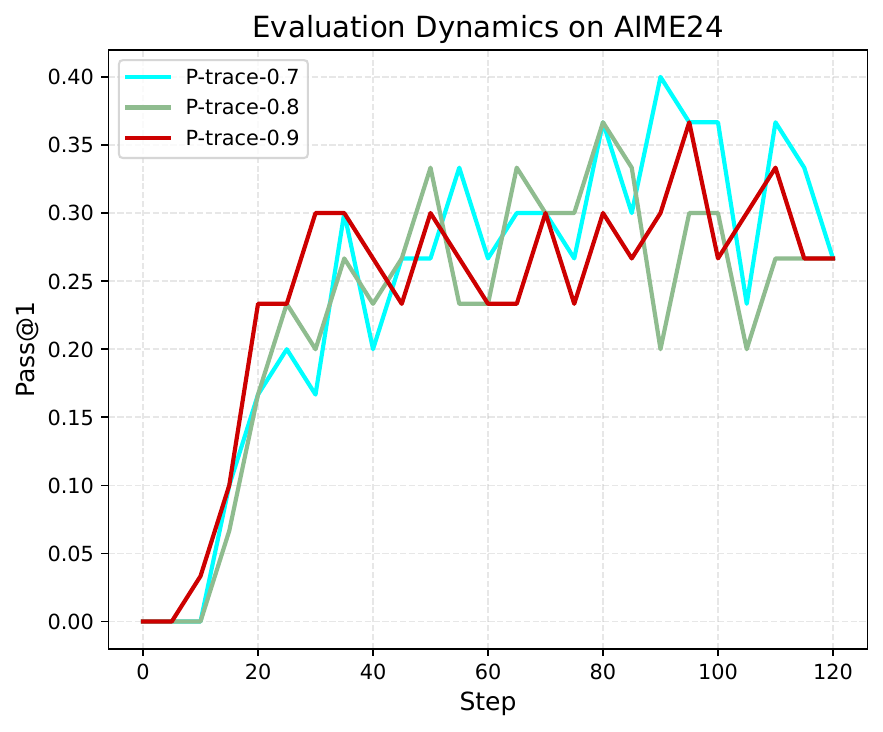}
        \caption{Pass@1 on AIME24}
        \label{fig:hp_aime24}
    \end{subfigure}\hspace{0.5cm}
    \begin{subfigure}[b]{0.35\textwidth}
        \centering
        \includegraphics[width=\linewidth]{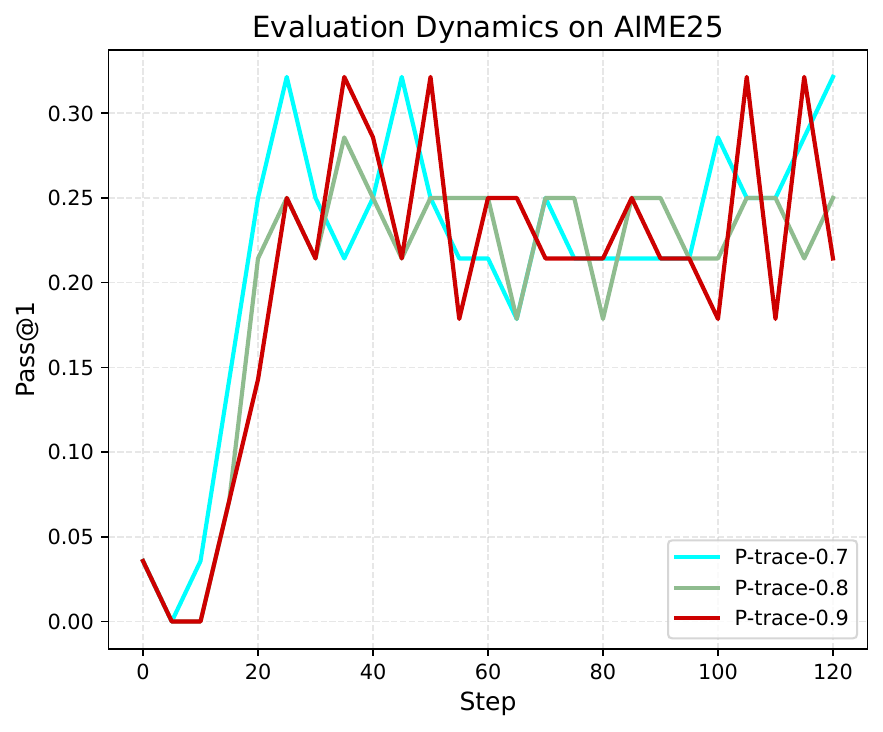}
        \caption{Pass@1 on AIME25}
        \label{fig:hp_aime25}
    \end{subfigure}
    
    \caption{\textbf{Sensitivity analysis of P-trace to $\lambda$.} Reducing $\lambda$ to $\{0.7, 0.8\}$ preserves sample efficiency comparable to the $\lambda=0.9$ baseline while yielding more stable clipping fractions. This demonstrates that P-trace operates within a lenient hyperparameter landscape, maintaining robustness without sacrificing performance.}
    \label{fig:hp_sensitivity_analysis}
\end{figure}
Additionally, Figure \ref{fig:hp_mean-resp} illustrates the dynamics of mean response length, where both $\lambda=0.7$ and $\lambda=0.8$ settings exhibit token efficiency comparable to P-trace-0.9 and consistently outperform the GRPO baseline. This further attests to the robustness of P-trace across a wide effective range of $\lambda$ values, while simultaneously suggesting a lenient hyperparameter landscape that invites further exploration to fully exploit the trade-off between stability and efficiency for optimal performance.

\section{Proofs}
\allowdisplaybreaks 
\subsection{Proof of Proposition \ref{prop:ppo_grad}}
\label{proof:B.1}
We adopt a derivation logic analogous to that of \cite{parthasarathi2025grpo}, but extend it to the off-policy case. The core insight lies in the decomposition and recombination of the Generalized Advantage Estimation (GAE) components. For notational brevity, we denote $\nabla_{\theta,t} \triangleq \nabla_\theta\log\pi_\theta(o_t|x,\textbf{o}_{<t})$. Additionally, recall that $e_t$ represents the PPO eligibility traces at time $t$, defined as:
\begin{equation*}
\begin{aligned}
    e_t &= \sum_{k=1}^{t}(\gamma\lambda)^{k-1}r_{t+1-k}(\theta)\nabla_{\theta, t+1-k} \\
    &= r_{t}(\theta)\nabla_{\theta,t} + \gamma\lambda e_{t-1},
\end{aligned}
\end{equation*}
with $t\in \big[1,2,\dots,|\textbf{o}|\big]$ and initial condition $e_0:=0$. Note that since the trajectory terminates at step $|\textbf{o}|$, the terms $\delta_t$ vanish for $t > |\textbf{o}|$.
\begin{proof}
\begin{align*}
    & \nabla_\theta \mathbb{E}_{\substack{x\sim\mathcal{Q} \\ \textbf{o}\sim\pi_{\theta_{\text{old}}}(\cdot|x)}} \Bigg[\sum_{t=1}^{|\textbf{o}|} r_{t}(\theta)\hat{A}_{t} \Bigg] \\
    &= \mathbb{E}_{\substack{x\sim\mathcal{Q} \\ \textbf{o}\sim\pi_{\theta_{\text{old}}}(\cdot|x)}} \Bigg[\nabla_\theta\sum_{t=1}^{|\textbf{o}|} r_{t}(\theta)\sum_{k=0}^\infty (\gamma\lambda)^k \delta_{t+k}\Bigg] \\
    &= \mathbb{E}_{\substack{x\sim\mathcal{Q} \\ \textbf{o}\sim\pi_{\theta_{\text{old}}}(\cdot|x)}} \Bigg[\sum_{t=1}^{|\textbf{o}|} r_{t}(\theta)\nabla_{\theta,t}\sum_{k=0}^\infty (\gamma\lambda)^k \delta_{t+k}\Bigg] \\
    &= \mathbb{E}_{\substack{x\sim\mathcal{Q} \\ \textbf{o}\sim\pi_{\theta_{\text{old}}}(\cdot|x)}} \Bigg[\sum_{t=1}^{|\textbf{o}|} r_{t}(\theta)\nabla_{\theta,t}\bigg(\delta_{t}+(\gamma\lambda)\delta_{t+1}+(\gamma\lambda)^2\delta_{t+2}+\dots\bigg)\Bigg] \\
    &= \mathbb{E}_{\substack{x\sim\mathcal{Q} \\ \textbf{o}\sim\pi_{\theta_{\text{old}}}(\cdot|x)}} \Bigg[ r_{1}(\theta)\nabla_{\theta,1}\bigg(\delta_{1}+(\gamma\lambda)\delta_{2}+(\gamma\lambda)^2\delta_{3}+\dots\bigg) \\
    &\qquad\qquad\quad + r_{2}(\theta)\nabla_{\theta,2}\bigg(\delta_{2}+(\gamma\lambda)\delta_{3}+(\gamma\lambda)^2\delta_{4}+\dots\bigg) \\
    &\qquad\qquad\quad + r_{3}(\theta)\nabla_{\theta,3}\bigg(\delta_{3}+(\gamma\lambda)\delta_{4}+(\gamma\lambda)^2\delta_{5}+\dots\bigg) + \dots \Bigg] \\
    &= \mathbb{E}_{\substack{x\sim\mathcal{Q} \\ \textbf{o}\sim\pi_{\theta_{\text{old}}}(\cdot|x)}} \Bigg[ \delta_1 \underbrace{r_{1}(\theta)\nabla_{\theta,1}}_{e_1} \\
    &\qquad\qquad\quad + \delta_2\Big(\underbrace{r_{2}(\theta)\nabla_{\theta,2}+(\gamma\lambda)r_{1}(\theta)\nabla_{\theta,1}}_{e_2}\Big) \\
    &\qquad\qquad\quad + \delta_3\Big(\underbrace{r_{3}(\theta)\nabla_{\theta,3}+(\gamma\lambda)r_{2}(\theta)\nabla_{\theta,2}+(\gamma\lambda)^2r_{1}(\theta)\nabla_{\theta,1}}_{e_3}\Big) + \dots \Bigg] \\
    &= \mathbb{E}_{\substack{x\sim\mathcal{Q} \\ \textbf{o}\sim\pi_{\theta_{\text{old}}}(\cdot|x)}} \Bigg[\sum_{t=1}^{|\textbf{o}|}\delta_{t}e_t\Bigg]. \qedhere
\end{align*}
\end{proof}
\subsection{Proof of Proposition \ref{prop:selective_variance_reduction}}
\label{proof:B.2}
In this section, we derive the variances of the policy gradient estimators $\mathbf{G}_t$ (P-trace) and $\mathbf{G}_t^\omega$ (S-trace). 
\begin{proof}
By definition and applying a change of index $j = t+1-k$, the P-trace estimator can be rewritten as:
\begin{equation}
\mathbf{G}_t = w_t \mathbf{g}_t + \sum_{j=1}^{t-1} (\gamma\lambda)^{t-j} w_t \mathbf{g}_j.
\end{equation}
Applying the variance operator and leveraging the assumption of negligible cross-covariance ($\operatorname{Cov}(w_t\mathbf{g}_i, w_t\mathbf{g}_j) \approx 0$ for $i \neq j\in[1,t]$), we have:
\begin{align}
\operatorname{Var}[\mathbf{G}_t] &= \operatorname{Var}\left[ w_t \mathbf{g}_t + \sum_{j=1}^{t-1} (\gamma\lambda)^{t-j} w_t \mathbf{g}_j \right] \nonumber \\
&\approx \operatorname{Var}[w_t \mathbf{g}_t] + \sum_{j=1}^{t-1} (\gamma\lambda)^{2(t-j)} \operatorname{Var}[w_t \mathbf{g}_j].
\end{align}
Given the uniform historical variance bound $\operatorname{Var}[w_t \mathbf{g}_j] \approx \sigma^2$ for $1\leq j < t$, and substituting $k = t-j$, we obtain the first result:
\begin{equation}
\operatorname{Var}[\mathbf{G}_t] \approx \operatorname{Var}[w_t \mathbf{g}_t] + \sigma^2 \sum_{k=1}^{t-1} (\gamma\lambda)^{2k}.
\label{eq:proof_var_ptrace}
\end{equation}

Similarly, the S-trace estimator can be expressed as:
\begin{equation}
\mathbf{G}_t^\omega = w_t \mathbf{g}_t + \sum_{j=1}^{t-1} (\gamma\lambda)^{t-j} \omega_j w_t \mathbf{g}_j.
\end{equation}
Using the negligible cross-covariance assumption, its variance expands to:
\begin{equation}
\operatorname{Var}[\mathbf{G}_t^\omega] \approx \operatorname{Var}[w_t \mathbf{g}_t] + \sum_{j=1}^{t-1} (\gamma\lambda)^{2(t-j)} \operatorname{Var}[\omega_j w_t \mathbf{g}_j].
\label{eq:strace_sum}
\end{equation}
To compute the term $\operatorname{Var}[\omega_j w_t \mathbf{g}_j]$, we exploit the independence between the Bernoulli mask $\omega_j$ and the trajectory-dependent gradient $w_t \mathbf{g}_j$. We utilize the variance identity for independent variables, $\operatorname{Var}[XY] = \mathbb{E}[X^2]\mathbb{E}[Y^2] - (\mathbb{E}[X]\mathbb{E}[Y])^2$. Since $\omega_j$ follows a Bernoulli distribution, we have $\mathbb{E}[\omega_j^2] = \mathbb{E}[\omega_j] = \rho$. Furthermore, the zero-mean assumption ($\mathbb{E}[w_t \mathbf{g}_j] \approx 0$) implies $\mathbb{E}[(w_t \mathbf{g}_j)^2] = \operatorname{Var}[w_t \mathbf{g}_j] \approx \sigma^2$. Consequently, the variance of the product is tightly derived as:
\begin{align}
\operatorname{Var}[\omega_j w_t \mathbf{g}_j] &= \mathbb{E}[\omega_j^2]\mathbb{E}[(w_t \mathbf{g}_j)^2] - \big(\mathbb{E}[\omega_j]\mathbb{E}[w_t \mathbf{g}_j]\big)^2 \nonumber \\
&\approx \rho \sigma^2 - (\rho \cdot 0)^2 \nonumber \\ 
&= \rho \sigma^2.
\end{align}
Substituting this result back into Eq.~\eqref{eq:strace_sum} and re-indexing $k = t-j$, we arrive at the final variance expression for S-trace:
\begin{equation}
\operatorname{Var}[\mathbf{G}_t^\omega] \approx \operatorname{Var}[w_t \mathbf{g}_t] + \rho\sigma^2 \sum_{k=1}^{t-1} (\gamma\lambda)^{2k}.
\label{eq:proof_var_strace}
\end{equation}

Comparing Eq.~\eqref{eq:proof_var_strace} with Eq.~\eqref{eq:proof_var_ptrace}, it strictly holds that $\operatorname{Var}[\mathbf{G}_t^\omega] < \operatorname{Var}[\mathbf{G}_t]$ for any selective rate $\rho \in (0, 1)$, which completes the proof.
\end{proof}

\allowdisplaybreaks 

\section{Details of Selective Eligibility Traces}
\subsection{Leave-Own-Out variant}
\label{app:lowo}
We implement the Leave-Own-Out (LOWO) variant of the $(\lambda, \omega)$-importance weight. In contrast to the standard formulation in Eq. \eqref{lambda_omega_importance_weight}, the LOWO variant explicitly exempts the current time step $t$ (i.e., its ``own'' importance weight) from the modulation of $\omega_{i,t}$. Consequently, regardless of the local entropy or the composition of the eligibility traces, this formulation ensures that the policy gradient for every token incorporates the component derived from its own importance weight. We can see this explicitly from the following derivation

\begin{align}
    \label{lowo_lambda_omega_importance_weight}
    r^{\lambda,\omega}_{i,t}(\theta) &= r_{i,t}(\theta) \cdot \prod_{k=2}^t\frac{\pi_{\theta}(o_{i,t-k+1}|x,\textbf{o}_{i,<t-k+1})^{\omega_{i,t-k+1}(\gamma\lambda)^{k-1}}}{\pi_{\theta_{\text{old}}}(o_{i,t-k+1}|x,\textbf{o}_{i,<t-k+1})^{\omega_{i,t-k+1}(\gamma\lambda)^{k-1}}} \\
    \label{lowo_lambda_omega_importance_weight_grad}
    \nabla_\theta\tilde{r}_{i,t}(\theta) &= \text{sg}\bigg[\frac{r_{i,t}(\theta)}{r^{\lambda,\omega}_{i,t}(\theta)}\bigg]\cdot \nabla_\theta r^{\lambda,\omega}_{i,t}(\theta) \notag \\
    &= \textcolor{red}{r_{i,t}(\theta)\nabla_\theta\log\pi_\theta(o_{i,t}|x,\textbf{o}_{i,<t})} \notag \\
    &\quad + r_{i,t}(\theta)\sum_{k=2}^t \omega_{i,t-k+1} (\gamma\lambda)^{k-1} \nabla_\theta \log\pi_{\theta}(o_{i,t-k+1}|\textbf{o}_{i,<t-k+1},x).
\end{align}

On the one hand, this design choice is critical for isolating the efficacy of the ``sparse eligibility traces'' from that of a ``sparse policy gradient'' which is a regime explicitly explored in \cite{wang2025beyond}. While \cite{wang2025beyond} leverages gradient sparsity (effectively masking out low-entropy tokens) to invoke the ``80/20 rule''
, our LOWO formulation avoids this confounding factor. By ensuring that the gradient update for every token explicitly includes the intrinsic contribution from its own importance weight, we guarantee that any observed performance gains are strictly attributable to the structural pruning of the eligibility traces rather than the selective training of tokens. On the other hand, the LOWO variant formulation establishes a connection between GRPO and S-trace via the selective rate $\rho$. Specifically, similar to GRPO($\lambda$), P/S-trace degenerates to standard GRPO when $\lambda \to 0$. Uniquely, the LOWO formulation provides an alternative degeneration path for S-trace which also reduces to GRPO as $\rho \to 0$. Otherwise, without the LOWO mechanism, setting $\rho=0$ would mask the policy gradients for all tokens, resulting in invalid updates. 

In our experiments, we fixed $\rho = 0.2$, motivated by the empirical 80/20 rule. While other values remain viable, we leave the exploration of this hyperparameter to future work.

\subsection{S-trace Algorithm}
\label{app:strace_algo}
In this section, we provide the detailed pseudocode for the proposed S-trace algorithm, which is summarized in Algorithm \ref{alg:selective_ptrace}.
\begin{algorithm}[tb]
  \caption{S-trace}
  \label{alg:selective_ptrace}
  \begin{algorithmic}
    \STATE {\bfseries Input:} initial policy model $\pi_\theta$, query dataset $\mathcal{Q}$, hyperparameters $\gamma,\lambda,\epsilon$, seletive rate $\rho=0.2$;
    \FOR{$s=1$ {\bfseries to} $S$}
    \STATE Sample a batch $\mathcal{Q}_B$ from $\mathcal{Q}$;
    \STATE For each prompt $x\in\mathcal{Q}_B$, sample $G$ trajectories $\{\textbf{o}_{i}\}_{i=1}^G\sim \pi_{\theta_{\text{old}}}(\cdot|x)$;
    \STATE Compute the advantage $\hat{A}_i$ for each sampled response $\textbf{o}_i$ using Eq. \eqref{group_adv_formula};
    \FOR{$b=1$ {\bfseries to} $B$}
    \STATE Compute the LOWO variant of $r_{i,t}^{\lambda,\omega}(\theta)$ for each token $o_{i,\nu}$ with $\nu=t+1-k$ and $\omega_{i,t}=1$:
    \[
          r^{\lambda,\omega}_{i,t}(\theta) =  \prod_{k=1}^t\frac{\pi_{\theta}(o_{i,\nu}|x,\textbf{o}_{i,<\nu})^{\omega_{i,\nu}(\gamma\lambda)^{k-1}}}{\pi_{\theta_{\text{old}}}(o_{i,\nu}|x,\textbf{o}_{i,<\nu})^{\omega_{i,\nu}(\gamma\lambda)^{k-1}}}
    \]
    \STATE Compute EIW $\tilde{r}_{i,t}(\theta)$ for each token $o_{i,t}$ using $r_{i,t}^{\lambda,\omega}(\theta)$ via Eq. \eqref{eligible_importance_weight};
    \STATE Update policy $\pi_\theta$ by maximizing:
    \[
    \begin{aligned}
        \mathbb{E}_{\substack{x\sim\mathcal{Q} \\ \{\textbf{o}_i\}_{i=1}^G\sim\pi_{\theta_{\text{old}}}(\cdot|x)}}\Bigg[ & \frac{1}{G}\sum_{i=1}^G \frac{1}{|\textbf{o}_i|}\sum_{t=1}^{|\textbf{o}_i|} \min\Big(\tilde{r}_{i,t}(\theta)\hat{A}_{i}, \text{clip}(\tilde{r}_{i,t}(\theta), 1-\epsilon, 1+\epsilon)\hat{A}_{i}\Big)\Bigg]
    \end{aligned}
    \]
    \ENDFOR
    \ENDFOR
  \end{algorithmic}
\end{algorithm}

\subsection{Interpretation of S-trace}
\label{strace_interpretation}

As shown in Proposition~\ref{prop:selective_variance_reduction}, the hyperparameter $\lambda$ in critic-free eligibility traces primarily governs the temporal horizon of credit assignment, where a larger $\lambda$ enhances memory persistence by preserving dependencies over extended sequences but inevitably incurs substantial variance. 
This creates a rigid coupling in standard formulations like GRPO($\lambda$) or P-trace such that extending the memory horizon necessitates accepting higher variance. 
In contrast, S-trace introduces an orthogonal dimension of control via the selective rate $\rho$ which governs structural sparsity by allowing the algorithm to selectively retain past events based on their informational salience (e.g., entropy) rather than indiscriminately incorporating the entire history. 
This design effectively decouples the temporal horizon from variance accumulation, enabling the maintenance of a long memory window for critical events while suppressing variance through the filtering of trivial actions. 
Theoretically, the LOWO formulation ensures consistency by guaranteeing that S-trace smoothly degenerates to the standard GRPO baseline at boundary conditions where either $\lambda$ or $\rho$ approaches zero. 

Biologically, this mechanism mirrors the Synaptic Tagging and Capture (STC) hypothesis~\citep{redondo2011making}, where $\rho$ functions as a gating mechanism analogous to ``Synaptic Tagging'' that identifies salient events for potential consolidation while $\lambda$ corresponds to ``Synaptic Capture'' by defining the temporal window during which plasticity-related resources remain available to stabilize these tagged synapses.

\subsection{Ablation Study on Entropy-Based S-trace}
\label{app:subsec:ablation}
In our proposed S-trace formulation, the selective mask $\omega$ is dynamically determined by the entropy of the generated tokens. To validate the efficacy of this design choice, we conduct an ablation study by substituting the entropy-based masking mechanism with a purely random mask (where tokens are selected uniformly at random with a matching expected probability). The empirical results evaluated on the Qwen3-4B backbone are presented in Table~\ref{tab:ablation}. As demonstrated in Table~\ref{tab:ablation}, replacing the entropy-based selective credit assignment with a random mask leads to a severe performance degradation across all five benchmarks. This substantial degradation demonstrates that arbitrarily propagating credit across random tokens fails to establish valid temporal dependencies. Conversely, the entropy-based mask in S-trace strategically identifies actions that serve as true logical bottlenecks within the reasoning chain, systematically linking them via eligibility traces. By concentrating credit assignment optimally on these pivotal steps, this mechanism enforces correct causal dependencies and maximizes the efficacy of trace-style updates.
\begin{figure}[H] 
    \centering
    \includegraphics[width=0.42\columnwidth]{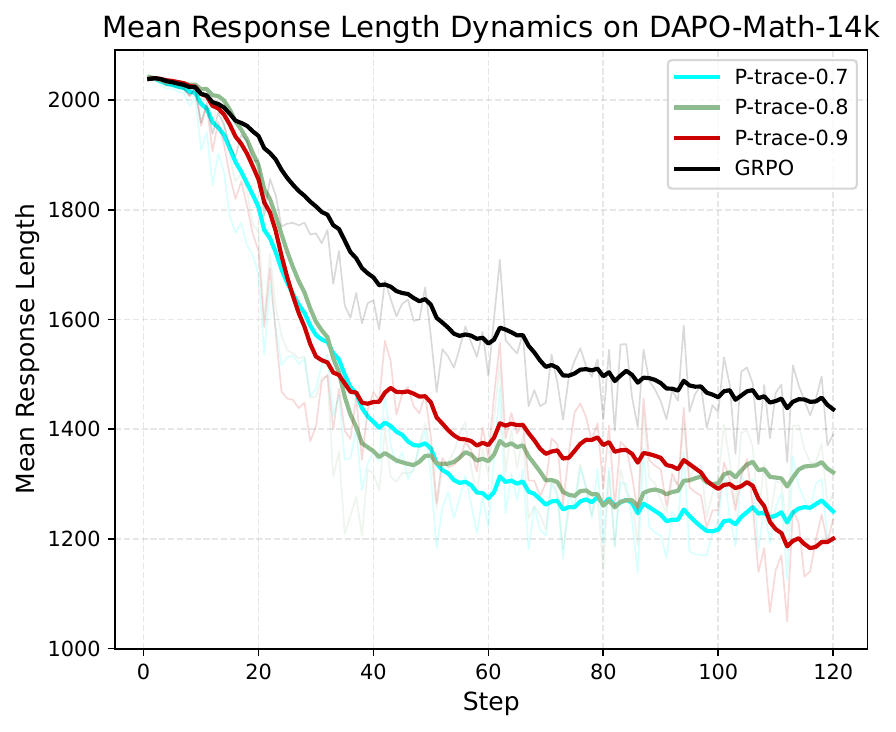}
    
    \caption{\textbf{Dynamics of mean response length under varying $\lambda$.} Both $\lambda=0.7$ and $\lambda=0.8$ settings maintain superior token efficiency comparable to the P-trace-0.9 baseline and consistently outperform GRPO. These results attest to the robustness of P-trace across a wide effective range.}
    \label{fig:hp_mean-resp}
\end{figure}

\begin{table}[!htbp]
    \centering
    \caption{Ablation study on the selection mechanism of S-trace-0.9 using the Qwen3-4B backbone. The best performance is highlighted in \textbf{bold}.}
    \label{tab:ablation}
    \setlength{\tabcolsep}{4pt} 
    \begin{tabular}{lcccccc}
        \toprule
        Qwen3-4B & MATH500 & AIME24 & AIME25 & AMC23 & Minerva & Avg. \\
        \midrule
        S-trace-0.9 w/ random mask  & 83.19 & 23.33 & 30.00 & 67.50 & 31.19 & 47.04 \\
        S-trace-0.9 w/ entropy mask & \textbf{85.40} & \textbf{30.00} & \textbf{32.18} & \textbf{80.00} & \textbf{33.28} & \textbf{52.17} \\
        \bottomrule
    \end{tabular}
\end{table}

\section{Comparative Analysis of Actor-Only Eligibility Traces Methods}
\label{app:method_comparison}

We provide a detailed comparison between P-trace and GRPO($\lambda$), elucidating their structural differences and connection to prior works.

\subsection{P-trace vs. GRPO($\lambda$)}
\label{app:gradient_comparison}
To facilitate a rigorous comparison between our P-trace method and the baselines proposed by \cite{parthasarathi2025grpo}—namely the \textit{trace-style} (GRPO($\lambda$)-t) and \textit{weight-style} (GRPO($\lambda$)-w) formulations—we derive the effective policy gradient from a per-token perspective.

\paragraph{Sample-Level Objectives.}
We first formulate the sample-level surrogate objectives for the token at time step $t$ across the three methods. Let $\tilde{\mathcal{J}}_t(\theta)$ denote the objective for P-trace, and $\mathcal{J}^{(1)}_t(\theta), \mathcal{J}^{(2)}_t(\theta)$ for GRPO($\lambda$)-t and GRPO($\lambda$)-w, respectively
\begin{align}
    \label{eq:obj_ptrace}
    \text{\textbf{P-trace}:} & \quad \tilde{\mathcal{J}}_t(\theta) = \text{sg}\bigg[\frac{r_{i,t}(\theta)}{r^\lambda_{i,t}(\theta)}\bigg]r^\lambda_{i,t}(\theta)\hat{A}_{i}, \\
    \label{eq:obj_grpo_t}
    \text{\textbf{GRPO($\lambda$)-t}:} & \quad \mathcal{J}^{(1)}_t(\theta) = r^\lambda_{i,t}(\theta)\hat{A}_{i}, \\
    \label{eq:obj_grpo_w}
    \text{\textbf{GRPO($\lambda$)-w}:} & \quad \mathcal{J}^{(2)}_t(\theta) = r_{i,t}(\theta)\hat{A}_{i}\sum_{k=t}^{|\textbf{o}_i|} (\gamma\lambda)^{k-t}.
\end{align}

\paragraph{Aggregated Gradient Derivation.}
For each method, we differentiate the respective objective with respect to $\theta$. Subsequently, by iterating through the entire trajectory and collecting all terms involving the specific policy gradient at time $t$, $\nabla_\theta\log\pi_\theta(o_{i,t}|x, \textbf{o}_{i,<t})$, we obtain the aggregated policy gradient contribution for that token.

\paragraph{Comparative Results.}
The resulting aggregated gradients for the token at time $t$ can be expressed in a unified form
\begin{equation}
     \mathcal{T}_{i,t} \cdot \hat{A}_{i} \nabla_\theta\log\pi_\theta(o_{i,t}|\textbf{o}_{i,<t},x),
\end{equation}
where the trace coefficient $\mathcal{T}_{i,t}$ varies by method
\begin{subequations}
\begin{align}
    \label{eq:grad_ptrace}
    \text{\textbf{P-trace}:} & \quad \mathcal{T}_{i,t} = \sum_{k=t}^{|\textbf{o}_i|} r_{i,k}(\theta) (\gamma\lambda)^{k-t}, \\
    \label{eq:grad_grpo_t}
    \text{\textbf{GRPO($\lambda$)-t}:} & \quad \mathcal{T}_{i,t} = \sum_{k=t}^{|\textbf{o}_i|} \textcolor{blue}{r^\lambda_{i,k}(\theta)} (\gamma\lambda)^{k-t}, \\
    \label{eq:grad_grpo_w}
    \text{\textbf{GRPO($\lambda$)-w}:} & \quad \mathcal{T}_{i,t} = \sum_{k=t}^{|\textbf{o}_i|} \textcolor{red}{r_{i,t}(\theta)} (\gamma\lambda)^{k-t} = \textcolor{red}{r_{i,t}(\theta)} \sum_{k=t}^{|\textbf{o}_i|} (\gamma\lambda)^{k-t}.
\end{align}
\end{subequations}

\paragraph{Analysis.}
We begin by observing that under the strict on-policy setting where importance weights satisfy $r_t(\theta) = 1$, all three formulations converge to an identical trace coefficient. The divergence in their performance arises solely from how they handle off-policy deviations, specifically through the mechanism of clipping. P-trace distinguishes itself by maintaining a significantly higher \textit{learning signal density} compared to its counterparts. On the one hand, as empirically validated in Figure \ref{fig:qwen3_pg_clipfrac_dynamics_combined}, GRPO($\lambda$)-t is prone to frequent clipping, a consequence of higher variance inherent in its importance weights compared to P-trace. This results in a sparse trace coefficient, where many learning signals in Eq. \eqref{eq:grad_grpo_t} are inadvertently discarded. Specifically, numerous $r^\lambda_{i,k}(\theta)$ terms are clipped to constant, thereby excluding them from the effective policy gradient computation. On the other hand, GRPO($\lambda$)-w relies exclusively on the instantaneous importance weight $r_{i,t}(\theta)$. If the specific token at time $t$ triggers the clipping threshold, its associated eligibility traces vanishes entirely, rendering the trace coefficient in Eq. \eqref{eq:grad_grpo_w} identically zero, which signifies a significantly more aggressive form of truncation than that of GRPO($\lambda$)-t. In contrast, P-trace orchestrates the trace coefficient $\mathcal{T}_{i,t}$ by encompassing the full spectrum of future importance weights. Eq. \eqref{eq:grad_ptrace} ensures that the gradient update remains active as long as there exist valid, unclipped importance weights in the future horizon—a condition that holds with high probability, as the simultaneous truncation of every future token is a statistically negligible event. This design is congruous with the principles of CISPO \cite{chen2025minimax}, which advocates for utilizing truncated importance weights rather than silencing the gradient entirely, thereby maximizing sample utilization. 

However, as demonstrated in Appendix \ref{app:subsec:instability_analysis}, the retention of a denser learning signal implies that P-trace undergoes more drastic parameter updates that lead to tangible training instability, underscoring a fundamental trade-off inherent to eligibility traces-based RLVR methods regarding how to reconcile the maximization of signal retention for enhanced sample efficiency with the imperative of optimization stability. To navigate this trade-off, trace-style GRPO($\lambda$) introduces a sequence-level importance weighting scheme analogous to GSPO (see Appendix~\ref{app:connection_gspo_trace}), which smooths token-level importance weights and induces an endogenously high clipping fraction due to increased variance. Although this stabilizes training, it inevitably compromises sample efficiency. Our approach, S-trace, capitalizes on the sample efficiency provided by P-trace, yet refines and stabilizes the update via selective credit assignment to mask trivial and noisy signals, thus maintaining training stability without compromising learning speed. This kind of trade-off also manifests similarly in the advantage computation within the RLVR domain, where REINFORCE-Rej \cite{xiong2025minimalist} posits that the superior performance of GRPO likely stems from its group-based advantage formulation capable of performing online filtering by discarding prompts yielding uniformly correct or incorrect responses where the resulting zero-valued advantage prevents parameter updates to ensure stability while conversely compromising sample efficiency.

\subsection{GRPO($\lambda$) vs. GSPO}
\label{app:connection_gspo_trace}

By elevating importance ratios and clipping mechanism to the sequence level, GSPO \cite{zheng2025group} strictly aligns the optimization granularity with sequence-level rewards. This structural consistency effectively mitigates the volatility inherent in the token-level importance weights of GRPO, thereby securing the training stability requisite for large-scale architectures, particularly Mixture-of-Experts (MoE) models. Here, we elucidate the theoretical connection between trace-style GRPO($\lambda$) and GSPO in this subsection, characterizing the latter as a specialized instantiation of eligibility traces mechanisms with uniform credit assignment.

Disregarding the clipping mechanism and the minimization operator to focus on the core gradient dynamics, the objective function for trace-style GRPO($\lambda$) can be formulated as
\begin{equation}
\label{app:grpo-lambda-t_obj}
\mathcal{J}_{\text{GRPO}(\lambda)}(\theta) = \mathbb{E}_{x \sim \mathcal{Q}, \{\textbf{o}_i\}_{i=1}^G \sim \pi_{\theta_{\text{old}}}(\cdot|x)} \left[ \frac{1}{G} \sum_{i=1}^G \frac{1}{|\textbf{o}_i|} \sum_{t=1}^{|\textbf{o}_i|} r^\lambda_{i,t}(\theta) \hat{A}_i \right]
\end{equation}
We restrict our attention to the optimization objective associated with the terminal token of each group response $o_i$, denoted as
\begin{equation}
\label{app:last-token_grpo-lambda-t_obj}
\mathcal{J}_{\text{GRPO}(\lambda), -1}(\theta) = \mathbb{E}_{x \sim \mathcal{Q}, \{\textbf{o}_i\}_{i=1}^G \sim \pi_{\theta_{\text{old}}}(\cdot|x)} \left[ \frac{1}{G} \sum_{i=1}^G r^\lambda_{i, |\textbf{o}_i|}(\theta) \hat{A}_i \right]
\end{equation}
Deriving the policy gradient for this terminal-token objective reveals
\begin{equation}
\begin{aligned}
\label{app:last-token_grpo-lambda-t_grad}
&\nabla_\theta \mathcal{J}_{\text{GRPO}(\lambda), -1}(\theta) \\
&= \mathbb{E}_{\substack{x \sim \mathcal{Q} \\ \{\textbf{o}_i\}_{i=1}^G \sim \pi_{\theta_{\text{old}}}}} \left[ \frac{1}{G} \sum_{i=1}^G \hat{A}_i \nabla_\theta r^\lambda_{i, |\textbf{o}_i|}(\theta) \right] \\
&= \mathbb{E}_{\substack{x \sim \mathcal{Q} \\ \{\textbf{o}_i\}_{i=1}^G \sim \pi_{\theta_{\text{old}}}}} \Bigg[ \frac{1}{G} \sum_{i=1}^G \hat{A}_i \prod_{t=1}^{|\textbf{o}_i|}\Bigg(\frac{\pi_{\theta}(o_{i,t}|x,\textbf{o}_{i,<t})^{\textcolor{blue}{(\gamma\lambda)^{|\textbf{o}_i|-t}}}}{\pi_{\theta_{\text{old}}}(o_{i,t}|x,\textbf{o}_{i,<t})^{\textcolor{blue}{(\gamma\lambda)^{|\textbf{o}_i|-t}}}}\Bigg) \\
&\qquad\qquad\qquad \times \sum_{t=1}^{|\textbf{o}_i|} \textcolor{blue}{(\gamma\lambda)^{|\textbf{o}_i|-t}} \nabla_\theta \log \pi_\theta(o_{i,t} | x, \textbf{o}_{i,<t}) \Bigg]
\end{aligned}
\end{equation}
Similarly, GSPO optimizes the policy by leveraging a sequence-level importance weight. The original objective function of GSPO is defined as
\begin{equation}
\label{app:gspo_obj}
\mathcal{J}_{\text{GSPO}}(\theta) = \mathbb{E}_{x \sim \mathcal{Q}, \{\textbf{o}_i\}_{i=1}^G \sim \pi_{\theta_{\text{old}}}(\cdot|x)} \left[ \frac{1}{G} \sum_{i=1}^G s_i(\theta) \hat{A}_i \right]
\end{equation}
where the sequence importance weight $s_i(\theta)$ employs a length-normalized sequence likelihood given by
\begin{equation}
\label{seq_iw}
s_i(\theta) = \left( \frac{\pi_\theta(\textbf{o}_i|x)}{\pi_{\theta_{\text{old}}}(\textbf{o}_i|x)} \right)^{\frac{1}{|\textbf{o}_i|}} = \prod_{t=1}^{|\textbf{o}_i|}\frac{\pi_\theta(o_{i,t} | x, \textbf{o}_{i,<t})^{\frac{1}{|\textbf{o}_i|}}}{\pi_{\theta_{\text{old}}}(o_{i,t} | x,\textbf{o}_{i,<t})^{\frac{1}{|\textbf{o}_i|}}}
\end{equation}
Computing the gradient of Eq. \eqref{app:gspo_obj} yields
\begin{equation}
\label{gspo_grad}
\nabla_\theta \mathcal{J}_{\text{GSPO}}(\theta) = \mathbb{E}_{x \sim \mathcal{Q}, \{\textbf{o}_i\}_{i=1}^G \sim \pi_{\theta_{\text{old}}}(\cdot|x)} \left[ \frac{1}{G} \sum_{i=1}^G s_i(\theta) \hat{A}_i \frac{1}{|\textbf{o}_i|} \sum_{t=1}^{|\textbf{o}_i|} \nabla_\theta \log \pi_\theta(o_{i,t} | x, \textbf{o}_{i,<t}) \right]
\end{equation}
Substituting the explicit expansion of $s_i(\theta)$ from Eq. \eqref{seq_iw} into Eq. \eqref{gspo_grad} allows us to express the GSPO gradient in a eligibility traces-analogous form
\begin{equation}
\begin{aligned}
\label{gspo_trace_grad}
\nabla_\theta \mathcal{J}_{\text{GSPO}}(\theta) 
&= \mathbb{E}_{\substack{x \sim \mathcal{Q} \\ \{\textbf{o}_i\}_{i=1}^G \sim \pi_{\theta_{\text{old}}}}} \Bigg[ \frac{1}{G} \sum_{i=1}^G \hat{A}_i \prod_{t=1}^{|\textbf{o}_i|}\Bigg(\frac{\pi_\theta(o_{i,t} | x,\textbf{o}_{i,<t})^{\textcolor{red}{\frac{1}{|\textbf{o}_i|}}}}{\pi_{\theta_{\text{old}}}(o_{i,t} | x, \textbf{o}_{i,<t})^{\textcolor{red}{\frac{1}{|\textbf{o}_i|}}}}\Bigg) \\
&\qquad\qquad\qquad \times \sum_{t=1}^{|\textbf{o}_i|} \textcolor{red}{\frac{1}{|\textbf{o}_i|}} \nabla_\theta \log \pi_\theta(o_{i,t} | x, \textbf{o}_{i,<t}) \Bigg]
\end{aligned}
\end{equation}

\begin{figure*}[t]
    \centering
    \captionsetup[subfigure]{labelfont=normalfont}
    
    \begin{subfigure}[b]{0.4\textwidth}
        \centering
        \includegraphics[height=4.2cm, keepaspectratio]{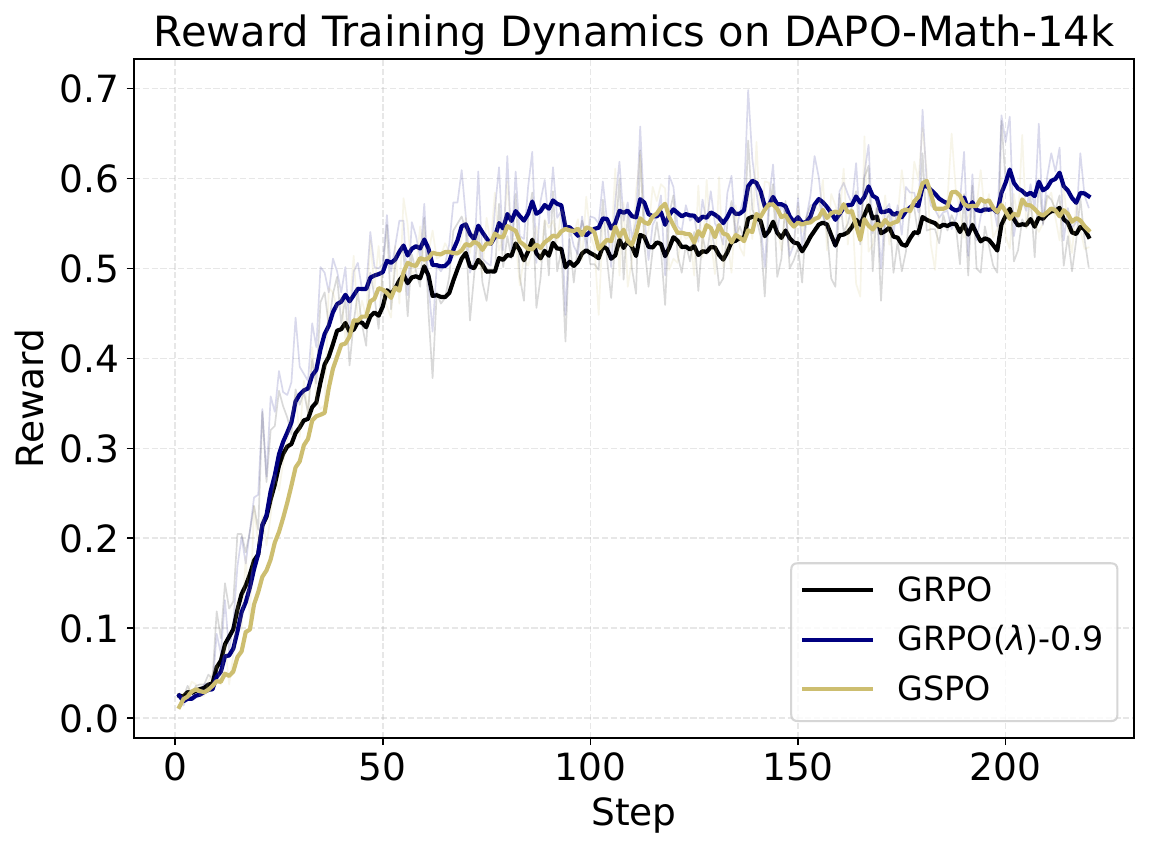}
        \caption{Reward}
        \label{fig:gspo_reward}
    \end{subfigure}\hspace{1.5cm}
    \begin{subfigure}[b]{0.4\textwidth}
        \centering
        \includegraphics[height=4.2cm, keepaspectratio]{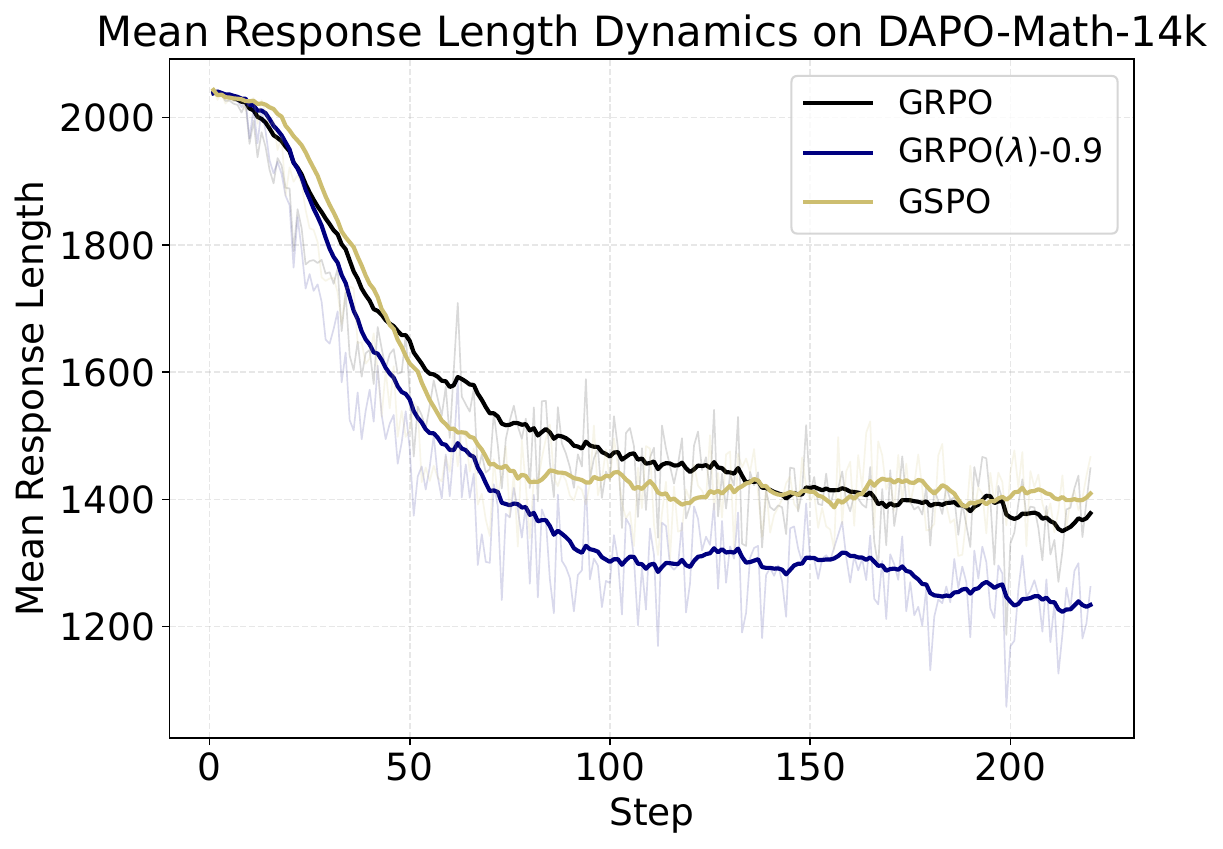}
        \caption{Response}
        \label{fig:gspo_resp}
    \end{subfigure}

    \vspace{2ex}

    \begin{subfigure}[b]{0.4\textwidth}
        \centering
        \includegraphics[height=4.2cm, keepaspectratio]{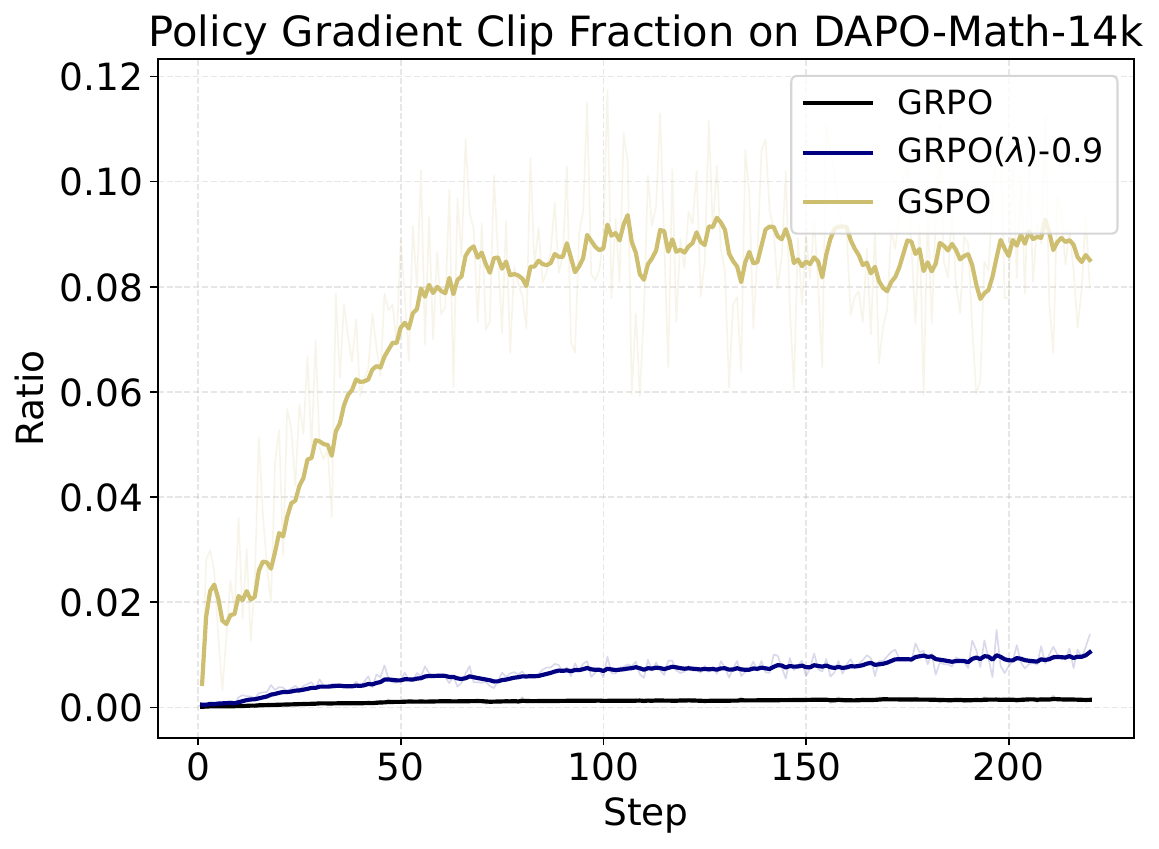}
        \caption{Clip Fraction}
        \label{fig:gspo_pgclip}
    \end{subfigure}\hspace{1.5cm}
    \begin{subfigure}[b]{0.4\textwidth}
        \centering
        \includegraphics[height=4.2cm, keepaspectratio]{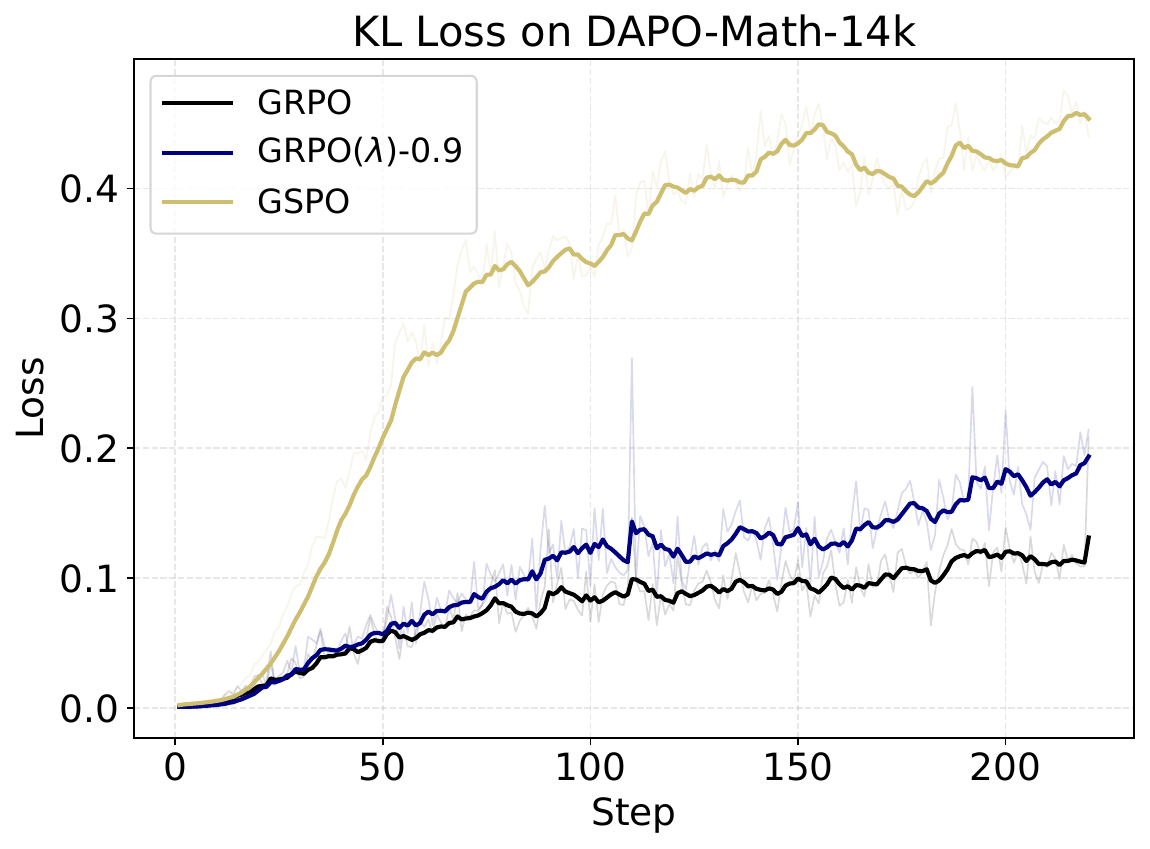}
        \caption{KL Divergence}
        \label{fig:gspo_kl}
    \end{subfigure}
        
    \caption{\textbf{Training dynamics comparison between uniform-based and recency-based eligibility traces.} The recency-based method (GRPO($\lambda$)) consistently outperforms the uniform-based baseline (GSPO) by achieving higher asymptotic performance and enhanced token efficiency, while simultaneously maintaining greater optimization stability with significantly lower KL divergence and clip fractions.}
    \label{fig:gspo_full_dynamics}
\end{figure*}

\begin{table*}[!phbt]
    \centering
    \caption{Empirical comparison of uniform-based and recency-based RL methods on Qwen3-4B. ``Resp.'' denotes the overall mean response length, calculated as the average of the mean response lengths across all training steps. The best performance is highlighted in \textbf{bold}, and the second-best is \underline{underlined}.}
    \label{tab:gspo_comparison}
    \resizebox{\linewidth}{!}{
    \begin{tabular}{llccccccc}
        \toprule
        Method & Credit Assignment & MATH & AIME24 & AIME25 & AMC23 & Minerva & Avg. & Resp. ($\downarrow$) \\
        \midrule
        
        GRPO \cite{shao2024deepseekmath} & uniform & 85.04 & \underline{33.33} & 26.67 & 70.00 & 30.02 & 49.01 & 1517.64 \\

        GSPO \cite{zheng2025group} & uniform & \textbf{86.42} & 25.48 & \textbf{32.88} & 72.50 & 31.31 & 49.72 & 1508.55 \\
        
        \midrule

        GRPO($\lambda$)-0.9 \cite{parthasarathi2025grpo} & recency & 84.87 & \textbf{38.83} & 26.66 & \textbf{80.00} & 30.52 & \textbf{52.18} & \underline{1405.60} \\

        S-trace-0.9 (ours) & recency & \underline{85.40} & 30.00 & \underline{32.18} & \textbf{80.00} & \textbf{33.28} & \underline{52.17} & \textbf{1353.16} \\
        \bottomrule
    \end{tabular}
    }
\end{table*}

This derivation unveils that GSPO inherently functions as a variation of the eligibility traces method where the temporal decay factors are replaced by a uniform averaging constant $1/|\textbf{o}_i|$ and the importance weights are smoothed via a geometric mean. Rooted in this theoretical kinship, empirical evaluations of GSPO reveal a high incidence of clipping events \cite{zheng2025group} resembling  that observed in our GRPO($\lambda$) experiments. Nevertheless, unlike GSPO, the trace-style GRPO($\lambda$) extends the gradient computation beyond the terminal token to leverage the partial sequences corresponding to every token within the response, synthesizing these partial signals through arithmetic averaging.

Based on the preceding theoretical analysis, GSPO can be fundamentally characterized as a trace-style method operating under a uniform credit assignment regime. To provide an empirical comparison between uniform-based and recency-based credit assignment strategies, we evaluate the fine-tuning performance of GSPO on the Qwen3-4B backbone, adopting its officially recommended clip ranges of $3 \times 10^{-4}$ and $4 \times 10^{-4}$. The results, summarized in Table~\ref{tab:gspo_comparison}, offer a comprehensive view of how different temporal decay priors influence model performance, clearly demonstrating the superiority of recency-based credit assignment over uniform-based methods in both reasoning accuracy and token efficiency.